\def\eqref#1{equation~\ref{#1}}
\def\Eqref#1{Equation~\ref{#1}}
\def\1{\bm{1}}
\def\vs{{\bm{s}}}
\DeclareMathAlphabet{\mathsfit}{\encodingdefault}{\sfdefault}{m}{sl}
\SetMathAlphabet{\mathsfit}{bold}{\encodingdefault}{\sfdefault}{bx}{n}
\newcommand{\E}[2]{\mathbb{E}_{#1} \left[#2\right]}
\newcommand{\abs}[1]{\left|#1\right|}
\newcommand{\I}[1]{\mathbb{I} \! \left\{#1\right\}}
\newtheorem{definition}{Definition}[section]
\newtheorem{lemma}{Lemma}[section]
\newtheorem{theorem}{Theorem}[section]
\renewcommand{\hat}{\widehat}
\title{Confidence-Conditioned Value Functions for Offline Reinforcement Learning}
\author{
  Joey Hong, Aviral Kumar, Sergey Levine  \\
  University of California, Berkeley \\
  \texttt{\{joey\_hong,aviralk\}@berkeley.edu, svlevine@eecs.berkeley.edu} \\
}
\begin{document}
\newcommand{\cmt}[1]{{\footnotesize\textcolor{red}{#1}}}
\newcommand{\cmto}[1]{{\footnotesize\textcolor{orange}{#1}}}
\newcommand{\note}[1]{\cmt{Note: #1}}
\newcommand{\todo}[1]{\cmt{TO-DO: #1}}
\newcommand{\question}[1]{\cmto{Question: #1}}
\newcommand{\sergey}[1]{{\footnotesize\textcolor{blue}{Sergey: #1}}}
\newcommand{\ak}[1]{{\textcolor{red}{#1}}}
\newcommand{\edits}[1]{\textcolor{blue}{#1}}
\newcommand{\editsred}[1]{\textcolor{red}{#1}}
\newcommand{\editsp}[1]{\textcolor{purple}{#1}}
\newcommand{\editsv}[1]{\textcolor{magenta}{#1}}
\newcommand{\commentjh}[1]{\cmt{JH: #1}}

\newcommand{\cG}{\mathcal{G}}


\newcommand{\x}{\mathbf{x}}
\newcommand{\z}{\mathbf{z}}
\newcommand{\y}{\mathbf{y}}
\newcommand{\w}{\mathbf{w}}
\newcommand{\data}{\mathcal{D}}

\newcommand{\etal}{{et~al.}\ }
\newcommand{\eg}{\emph{e.g.},\ }
\newcommand{\ie}{\emph{i.e.},\ }
\newcommand{\nth}{\text{th}}
\newcommand{\pr}{^\prime}
\newcommand{\tr}{^\mathrm{T}}
\newcommand{\inv}{^{-1}}
\newcommand{\pinv}{^{\dagger}}
\newcommand{\real}{\mathbb{R}}
\newcommand{\gauss}{\mathcal{N}}
\newcommand{\norm}[1]{\left|#1\right|}
\newcommand{\trace}{\text{tr}}

\newcommand{\reward}{r}
\newcommand{\policy}{\pi}
\newcommand{\mdp}{\mathcal{M}}
\newcommand{\states}{\mathcal{S}}
\newcommand{\actions}{\mathcal{A}}
\newcommand{\observations}{\mathcal{O}}
\newcommand{\transitions}{P}
\newcommand{\initstate}{d_0}
\newcommand{\freq}{d}
\newcommand{\obsfunc}{E}
\newcommand{\initial}{\mathcal{I}}
\newcommand{\horizon}{H}
\newcommand{\rewardevent}{R}
\newcommand{\probr}{p_\rewardevent}
\newcommand{\metareward}{\bar{\reward}}
\newcommand{\discount}{\gamma}
\newcommand{\behavior}{{\pi_\beta}}
\newcommand{\bellman}{\mathcal{B}}
\newcommand{\qparams}{\phi}
\newcommand{\qparamset}{\Phi}
\newcommand{\qset}{\mathcal{Q}}
\newcommand{\batch}{B}
\newcommand{\qfeat}{\mathbf{f}}
\newcommand{\Qfeat}{\mathbf{F}}
\newcommand{\hatbehavior}{\hat{\pi}_\beta}

\newcommand{\traj}{\tau}

\newcommand{\pihi}{\pi^{\text{hi}}}
\newcommand{\pilo}{\pi^{\text{lo}}}
\newcommand{\ah}{\mathbf{w}}

\newcommand{\proj}{\Pi}

\newcommand{\loss}{\mathcal{L}}
\newcommand{\eye}{\mathbf{I}}

\newcommand{\model}{\hat{p}}

\newcommand{\pimix}{\pi_{\text{mix}}}

\newcommand{\pib}{\bar{\pi}}
\newcommand{\epspi}{\epsilon_{\pi}}
\newcommand{\epsmodel}{\epsilon_{m}}

\newcommand{\return}{\mathcal{R}}

\newcommand{\cY}{\mathcal{Y}}
\newcommand{\cX}{\mathcal{X}}
\newcommand{\en}{\mathcal{E}}
\newcommand{\bu}{\mathbf{u}}
\newcommand{\bv}{\mathbf{v}}
\newcommand{\be}{\mathbf{e}}
\newcommand{\by}{\mathbf{y}}
\newcommand{\bx}{\mathbf{x}}
\newcommand{\bz}{\mathbf{z}}
\newcommand{\bw}{\mathbf{w}}
\newcommand{\boldm}{\mathbf{m}}
\newcommand{\bo}{\mathbf{o}}
\newcommand{\bs}{\mathbf{s}}
\newcommand{\ba}{\mathbf{a}}
\newcommand{\bM}{\mathbf{M}}
\newcommand{\ot}{\bo_t}
\newcommand{\st}{\bs_t}
\newcommand{\at}{\ba_t}
\newcommand{\op}{\mathcal{O}}
\newcommand{\opt}{\op_t}
\newcommand{\kl}{D_\text{KL}}
\newcommand{\tv}{D_\text{TV}}
\newcommand{\ent}{\mathcal{H}}
\newcommand{\bG}{\mathbf{G}}
\newcommand{\byk}{\mathbf{y_k}}
\newcommand{\bI}{\mathbf{I}}
\newcommand{\bg}{\mathbf{g}}
\newcommand{\bV}{\mathbf{V}}
\newcommand{\bD}{\mathbf{D}}
\newcommand{\bR}{\mathbf{R}}
\newcommand{\bQ}{\mathbf{Q}}
\newcommand{\bA}{\mathbf{A}}
\newcommand{\bN}{\mathbf{N}}
\newcommand{\bS}{\mathbf{S}}
\newcommand{\bW}{\mathbf{W}}
\newcommand{\bU}{\mathbf{U}}
\newcommand{\bO}{\mathbf{O}}

\newcommand{\bzhi}{\bz^\text{hi}}
\newcommand{\expected}{\mathbb{E}}
\newcommand{\srank}{\text{srank}}
\newcommand{\rank}{\text{rank}}
\newcommand{\deepnet}{\bW_N(k, t) \bW_\phi(k, t)}
\newcommand{\features}{\bW_\phi(k, t)}
\newcommand{\stateactioni}{[\bs_i; \ba_i]}
\newcommand{\diag}{\text{\textbf{diag}}}
\newcommand{\cosine}{\text{cos}}
\newcommand{\subopt}{\mathsf{SubOpt}}
\newcommand{\hatpi}{\hat{\pi}}
\newcommand{\variance}{\mathbb{V}}
\newcommand{\indicator}{\mathbb{I}}

\def\thetaP{\theta^{\prime}}
\def\cf{\emph{c.f.}\ }
\def\vs{\emph{vs}.\ }
\def\etc{\emph{etc.}\ }
\def\Eqref#1{Equation~\ref{#1}}

\newcommand{\condE}[2]{\mathbb{E} \left[#1 \,\middle|\, #2\right]}
\newcommand{\Et}[1]{\mathbb{E}_t \left[#1\right]}
\newcommand{\prob}[1]{\mathbb{P} \left(#1\right)}
\newcommand{\condprob}[2]{\mathbb{P} \left(#1 \,\middle|\, #2\right)}
\newcommand{\probt}[1]{\mathbb{P}_t \left(#1\right)}
\newcommand{\var}[1]{\mathrm{Var} \left[#1\right]}
\newcommand{\condvar}[2]{\mathrm{Var} \left[#1 \,\middle|\, #2\right]}
\newcommand{\std}[1]{\mathrm{Std} \left[#1\right]}
\newcommand{\condstd}[2]{\mathrm{Std} \left[#1 \,\middle|\, #2\right]}
\newcommand{\expec}{\mathbb{E}}
\newcommand{\hatvpikp}{\widehat{V}^{\widehat{\pi}_{k+1}}}
\newcommand{\hatvpik}{\widehat{V}^{\widehat{\pi}_{k}}}

\def\polylog{\operatorname{polylog}}

\newenvironment{repeatedthm}[1]{\@begintheorem{#1}{\unskip}}{\@endtheorem}

\newcommand{\methodname}{CDS}
\newcommand{\aliasingproblemname}{bootstrapping aliasing}
\newcommand{\Aliasingproblemname}{Bootstrapping aliasing}
\newcommand{\AliasingProblemName}{Bootstrapping Aliasing}
\newcommand{\simnorm}{\mathrm{sim}_{\mathrm{n}}^\pi}
\newcommand{\simunnorm}{\mathrm{sim}_{\mathrm{u}}^\pi}

\newcommand{\vilcb}{\ensuremath{\tt VI\textsf{-}LCB } }
\newcommand{\vimbs}{\ensuremath{\tt MBS\textsf{-}VI } }
\newcommand{\pimbs}{\ensuremath{\tt MBS\textsf{-}PI } }

\newcommand{\bc}{\ensuremath{\tt BC }}
\newcommand{\bcpi}{\ensuremath{\tt BC\textsf{-}PI }}
\newcommand{\bcpik}{\ensuremath{\tt BC\textsf{-}kPI }}
\newcommand{\rlc}{\ensuremath{\tt RL\textsf{-}C }}
\newcommand{\rlpc}{\ensuremath{\tt RL\textsf{-}PC }}

\maketitle
\vspace{-0.25in}
\begin{abstract}
Offline reinforcement learning (RL) promises the ability to learn effective policies solely using existing, static datasets, without any costly online interaction. To do so, offline RL methods must handle distributional shift between the dataset and the learned policy. The most common approach is to learn conservative, or lower-bound, value functions, which underestimate the return of out-of-distribution (OOD) actions. However, such methods exhibit one notable drawback: policies optimized on such value functions can only behave according to a fixed, possibly suboptimal, degree of conservatism. However, this can be alleviated if we instead are able to learn policies for varying degrees of conservatism at training time and devise a method to dynamically choose one of them during evaluation. To do so, in this work, we propose learning value functions that additionally condition on the degree of conservatism, which we dub \emph{confidence-conditioned value functions}. We derive a new form of a Bellman backup that simultaneously learns Q-values for any degree of confidence with high probability. By conditioning on confidence, our value functions enable adaptive strategies during online evaluation by controlling for confidence level using the history of observations thus far.
This approach can be implemented in practice by conditioning the Q-function from existing conservative algorithms on the confidence.
We theoretically show that our learned value functions produce conservative estimates of the true value at any desired confidence. Finally, we empirically show that our algorithm outperforms existing conservative offline RL algorithms on multiple discrete control domains.  
\end{abstract}

\section{Introduction}

Offline reinforcement learning (RL) aims to learn effective policies entirely from previously collected data, without any online interaction \citep{levine2020offline}.
This addresses one of the main bottlenecks in the practical adoption of RL in domains such as recommender systems \citep{afsar2021reinforcement}, healthcare \citep{shortreed2011informing,Wang2018SupervisedRL}, and robotics \citep{kalashnikov2018qtopt}, where exploratory behavior can be costly and dangerous.
However, offline RL introduces new challenges, primarily caused by \emph{distribution shift}. Na\"{i}ve algorithms can grossly overestimate the return of actions that are not taken by the \emph{behavior policy} that collected the dataset \citep{kumar2019stabilizing}. Without online data gathering and feedback, the learned policy will exploit these likely suboptimal actions.
One common approach to handle distribution shift in offline RL is to optimize a a conservative lower-bound estimate of the expected return, or Q-values \citep{kumar2020conservative,kostrikov2021offline,yu2020mopo}. By intentionally underestimating the Q-values of out-of-distribution (OOD) actions, policies are discouraged from taking OOD actions. However, such algorithms rely on manually specifying the desired degree of conservatism, which decides how pessimistic the estimated Q-values are. The performance of these algorithms is often sensitive to this choice of hyperparameter, and an imprecise choice can cause such algorithms to fail. 

Our work proposes the following solution:
instead of learning one pessimistic estimate of Q-values, we propose an offline RL algorithm that estimates Q-values for all possible degrees of conservatism. 
We do so by conditioning the learned Q-values on its \emph{confidence level}, or probability that it achieves a lower-bound on the true expected returns.
This allows us to learn a range of lower-bound Q-values of different confidences.
These \emph{confidence-conditioned} Q-values enables us to do something conservative RL algorithms could not---control the level of confidence used to evaluate actions.
Specifically, when evaluating the offline-learned Q-values, policies derived from conservative offline RL algorithms must follow a static behavior, even if the online observations suggest that they are being overly pessimistic or optimistic. However, our approach enables \emph{confidence-adaptive} policies that can correct their behavior using online observations, by simply adjusting the confidence-level used to estimate Q-values. We posit that this adaptation leads to successful policies more frequently than existing static policies that rely on tuning a rather opaque hyperparameter during offline training. 


Our primary contribution is a new offline RL algorithm that we call \emph{confidence-conditioned value-learning} (CCVL), which learns a mapping from confidence levels to corresponding lower-bound estimations of the true Q-values. 
Our theoretical analysis shows that our method learns appropriate lower-bound value estimates for any confidence level.
Our algorithm also has a practical implementation that leverages multiple existing ideas in offline RL. Namely, we use network parameterizations studied in distributional RL to predict Q-values parameterized by confidence \citep{dabney2018distributional,dabney2018implicit}. Our objective, similar to conservative Q-learning (CQL) \citep{kumar2020conservative}, uses regularization to learn Q-values for all levels of pessimism and optimism, instead of anti-exploration bonuses that may be difficult to accurately compute in complex environments \citep{razaeifar2021offline}.
In addition, our algorithm can be easily extended to learn both lower- and upper-bound estimates, which can be useful when fine-tuning our offline-learned value function on additional data obtained via online exploration.
Finally, we show that our approach outperforms existing state-of-the-art approaches in discrete-action environments such as Atari \citep{mnih2013playing,bellemare2013ale}. Our empirical results also confirm that conditioning on confidence, and controlling the confidence from online observations, can lead to significant improvements in performance.

\vspace{-0.1in}
\section{Related Work}
\label{sec:related_work}
\vspace{-0.1in}

Offline RL~\citep{LangeGR12, levine2020offline} has shown promise in numerous domains. The major challenge in offline RL is distribution shift~\citep{kumar2019stabilizing},
where the learned policy might select out-of-distribution actions with unpredictable consequences. 
Methods to tackle this challenge can be roughly categorized into policy-constraint or conservative methods.
Policy-constraint methods regularize the learned policy to be ``close'' to the behavior policy either explicitly in the objective via a policy regularizer ~\citep{fujimoto2018off,kumar2019stabilizing,liu2020provably,wu2019behavior,fujimoto2021minimalist}, implicitly update~\citep{siegel2020keep,peng2019advantage,nair2020accelerating}, or via importance sampling~\citep{LiuSAB19, SwaminathanJ15, nachum2019algaedice}. On the other hand, conservative methods learn a lower-bound, or conservative, estimate of return and optimize the policy against it~\citep{kumar2020conservative,kostrikov2021offline,kidambi2020morel,yu2020mopo,yu2021combo}. Conservative approaches traditionally rely on estimating the epistemic uncertainty, either explicitly via exploration bonuses~\citep{razaeifar2021offline} or implicitly using regularization on the learned Q-values~\citep{kumar2020conservative}.
The limitation of existing offline RL approaches is that the derived policies can only act under a fixed degree of conservatism, which is determined by an opaque hyperparameter that scales the estimated epistemic uncertainty, and has to be chosen during offline training. This means the policies will be unable to correct their behavior online, even if it becomes evident from online observations that the estimated value function is too pessimistic or optimistic.

Our algorithm learns confidence-conditioned Q-values that capture all possible degrees of pessimism by conditioning on the confidence level, modeling epistemic uncertainty as a function of confidence.
By doing so, instead of committing to one degree of pessimism, we enable policies that adapt how conservative they should behave using the observations they sees during online evaluation. 
Our approach is related to ensemble~\citep{agarwal2019optimistic,lee2021sunrise,chen2021redq,an2021uncertainty} approaches in that they also predict multiple Q-values to model epistemic uncertainty.
However, existing ensemble methods train individual Q-values on the same objective, and rely on different parameter initializations. In contrast, each of our Q-values captures a different confidence-level. In addition, standard ensemble approaches do not consider adaptive policies. Recently, APE-V proposes using ensembles to learn adaptive policies that condition on belief over which value function is most accurate \citep{ghosh2022offline}. Our approach considers a similar strategy for adaptation, but explicitly parameterizes the value function by the confidence level, introducing a novel training objective for this purpose. In our experiments, we compare to a method that adapts APE-V to our discrete-action benchmark tasks. 
\citet{jiang2020minimax,dai2020coindice} propose confidence intervals for policy evaluation at specified confidence-levels. We aim to learn a value function across all confidences, and use it for adaptive policy optimization. Finally, distributional RL \citep{dabney2017distributional,bellemare2017distributional,dabney2018distributional} learns a distribution over values, but only capture aleatoric uncertainty, whereas our focus is on epistemic uncertainty and offline RL.


\section{Preliminaries}
\label{sec:prelim}
The goal in reinforcement learning is to learn a policy $\pi(\cdot|\bs)$ that maximizes the expected cumulative discounted reward in a Markov decision process (MDP), which is defined by a tuple $(\mathcal{S}, \mathcal{A}, \transitions, R, \gamma)$.
$\mathcal{S}, \mathcal{A}$ represent state and action spaces, $\transitions(\bs' | \bs, \ba)$ and $R(\bs,\ba)$ represent the dynamics and reward distribution, and $\gamma \in (0,1)$ represents the discount factor. We assume that the reward $r(\bs, \ba)$ is bounded in magnitude, \ie $|r(\bs, \ba)| \leq R_{max}$ for some finite $R_{max}$. $\pi_\beta(\ba|\bs)$ represents the (unknown) behavior policy used to collect the offline dataset $\mathcal{D}$ that will be used for training,
$d^{\pi_\beta}(\bs)$ is the discounted marginal state distribution of $\pi_\beta(\ba|\bs)$, and the offline dataset $\mathcal{D} = \{(\bs, \ba, r, \bs')\}$ is formed from interactions sampled from $d^{\pi_\beta}(\bs) \pi_\beta(\ba|\bs)$.

Policy evaluation attempts to learn the $Q$-function $Q^\pi: \states \times \actions \to \mathbb{R}$ of a policy $\pi$ at all state-action pairs $(\bs, \ba) \in \states \times \actions$,
Specifically, for a policy $\pi$, its Q-value $Q^{\pi}(\bs, \ba) = \E{\pi}{\sum_{t = 0}^{\infty} \gamma^t r_t}$ is its expected mean return starting from that state and action. 
The Q-function is the unique fixed point of the Bellman operator $\bellman^\pi$ given by
$
\bellman^\pi Q(\bs, \ba) = r(\bs, \ba) + \gamma \E{s' \sim P(s' \mid \bs, \ba), \ba' \sim \pi(\ba' \mid \bs')}{Q(\bs', \ba')}
$,
meaning $Q^{\pi} = \bellman^\pi Q^{\pi}$. 
Q-learning learns $Q^* = Q^{\pi^*}$ as the fixed point of the Bellman optimality operator $\bellman^*$ given by
$
\bellman^* Q(\bs, \ba) = r(\bs, \ba) + \gamma \E{s' \sim P(s' \mid \bs, \ba)}{\max_{\ba'} Q(\bs', \ba')}
$,
and derives the optimal policy $\pi^*(\ba \mid \bs) = \I{\ba = \arg\max_{\ba} Q^*(\bs, \ba)}$.

\paragraph{Offline reinforcement learning.}  In offline RL, we are limited to interactions that appear in the dataset $\data$ of $N$ samples $(\bs, \ba, r, \bs')$, where $\ba \in \actions$ is derived from some suboptimal behavior policy.
Hence, we do not have access to the optimal actions used in the backup of the Bellman optimality operator.
Because of this, offline RL suffers from distributional shift \citep{kumar19bear,levine2020offline}.
Prior methods address this issue by learning conservative, or lower-bound, value functions that underestimate expected return outside of the dataset.
One method to accomplish this is to subtract \emph{anti-exploration bonuses} that are larger for out-of-distribution (OOD) states and actions \citep{razaeifar2021offline}:
\begin{align}
\hat{Q}^{k + 1} &= \arg\min_Q
\frac{1}{2}\E{\bs, \ba, \bs' \sim \mathcal{D}}{\left(Q(\bs, \ba) - \hat{\bellman}^{*} \hat{Q}^k(\bs, \ba) - \alpha \sqrt{\frac{1}{n(s, a) \wedge 1}} \right)^2}\,,
\label{eq:anti_exploration}
\end{align}
where $\alpha > 0$ is a hyperparameter.
Another relevant method is conservative Q-learning (CQL) \citep{kumar2020conservative}, which proposes a regularizer to the standard objective to learn pessimistic Q-values:
\begin{align}
\label{eq:cql}
\hat{Q}^{k + 1} &= \arg\min_Q \max_{\pi} \
\alpha \left(\E{\bs \sim \mathcal{D}, \ba \sim \pi(\ba \mid \bs)}{Q(\bs, \ba)} - \E{\bs, \ba \sim \mathcal{D}}{Q(\bs, \ba)}\right) \\
&\qquad\qquad\qquad+ 
\frac{1}{2}\E{\bs, \ba, \bs' \sim \mathcal{D}}{\left(Q(\bs, \ba) - \hat{\bellman}^{*} \hat{Q}^k(\bs, \ba) \right)^2} + \mathcal{R}(\pi)\,.
\nonumber
\end{align}
Here, $\pi$ is some policy that approximately maximizes the current Q-function iterate, and $\mathcal{R}$ is some regularizer. 
This objective includes a penalty that ensures Q-values at OOD actions are underestimated compared to in-distribution (ID) actions.
Such methods learn lower-bound value functions for a fixed confidence-level, that is implicitly captured in hyperparameter $\alpha$. In this paper, we propose learning value functions that condition the confidence-level explicitly. 

\paragraph{Additional notation.} Let $n \wedge 1 = \max\{n, 1\}$. Denote $\iota = \polylog(|\states|, (1 - \gamma)^{-1}, N)$. We let $\iota$ be a polylogarithmic quantity, changing with context.

\section{Confidence-Conditioned Value Functions}
\label{sec:approach}

In this section, we describe our method for learning \emph{confidence-conditioned value functions}, such that conditioned on some confidence level $\delta \in (0, 1)$, the learned Q-function can lower-bound its true value with probability $1 - \delta$.
Because such Q-functions depend not only on state-action pairs, but also the confidence $\delta$, they enable adaptive policies that change behavior based on $\delta$, and adjust delta to maximize online performance.
In contrast, pessimistic offline RL is limited to a fixed Markovian strategy. We first propose a novel Q-learning algorithm, which we dub \emph{confidence-conditioned value learning} (CCVL), then show how such learned Q-function enables adaptive strategies, dubbed \emph{confidence-adaptive policies}. In this work, we focus on discrete-action environments, but our insights can be straightforwardly extended to develop actor-critic algorithms for continuous environments.

\subsection{Confidence-Conditioned Value Learning}
\label{sec:value learning}
Recall from Section~\ref{sec:prelim} that standard Q-learning involves learning Q-values that satisfy the Bellman optimality update $Q^* = \bellman^* Q^*$. We are interested in learning confidence-conditioned Q-values, which we define as:
\begin{definition}
A \emph{confidence-conditioned value function} $Q(\bs, \ba, \delta)$ satisfies, for a given $\delta \in (0, 1)$: 
\begin{align}
Q(\bs, \ba, \delta) = \sup~ q \quad \text{such that} \quad  
\mathsf{Pr}[Q^*(\bs, \ba) \geq q] \geq 1 - \delta\,.
\label{eq:value}
\end{align}
\vspace{-0.2in}
\end{definition}

Note that we include the suprenum to prevent $Q(\bs, \ba, \delta) = Q(\bs, \ba, 0)$ for all other values of $\delta$. 
The randomness is due to noise in dataset sampling, as the dataset is used to compute our learned value function.
To achieve a high-probability lower-bound on $Q^*(\bs, \ba)$, we account for two sources of uncertainty: (1) we must approximate the Bellman optimality operator, which assumes known reward and transition model, using samples in $\data$,
and (2) we need to additionally lower-bound the target $Q^*$ used in the Bellman backup. 
The uncertainty due to (1), also called \emph{epistemic uncertainty}, can be bounded using concentration arguments on the samples from $\data$. Namely, we define $b(\bs, \ba, \delta)$ as a high-probability \emph{anti-exploration bonus} that upper-bounds epistemic uncertainty, or
$
\prob{\abs{\bellman^* Q^*(\bs, \ba) - \hat{\bellman}^* Q^*(\bs, \ba)} \leq b(\bs, \ba, \delta)} \geq 1 - \delta
$.
Such bonuses are well-studied in the prior literature ~\citep{burda2018exploration,razaeifar2021offline}, and can be derived using concentration inequalities such as Chernoff-Hoeffding or Bernstein. Using the former, the bonuses are given by
$
b(\bs, \ba, \delta) = \sqrt{\frac{\iota \log(1 / \delta)}{n(\bs, \ba) \wedge 1}}
$,
where $n(\bs, \ba)$ is the number of times the state-action pair appears in $\data$. 
Next, the uncertainty due to (2) can be straightforwardly bounded using our learned Q-function. 
This gives rise to the iterative update for training the confidence-conditioned Q-function:
\begin{align}
    \hat{Q}^{k+1}
    &= \arg\min_Q \frac{1}{2} \E{\bs, \ba , \bs' \sim D}{\left(Q(\bs, \ba, \delta) - \max_{\delta_1, \delta_2 \leq \delta} \hat{\bellman}^* \,\hat{Q}^k(\bs, \ba, \delta_2) -  \alpha \sqrt{\frac{\log(1/\delta_1)}{n(\bs, \ba) \wedge 1}} \right)^2}\,,
\label{eq:update}
\end{align}
where $\alpha > 0$ is again some hyperparameter.
In Theorem~\ref{thm:lower bound}, we show that for any confidence level $\delta \in (0, 1)$, the resulting Q-values $\hat{Q}(\bs, \ba, \delta) = \lim_{k \to \infty} \hat{Q}^k(\bs, \ba, \delta)$ lower-bounds the true Q-value $Q^*(\bs, \ba)$ with probability at least $1 - \delta$. 

Note that Equation~\ref{eq:update} is similar to a traditional Q-learning using anti-exploration bonuses, as in Equation~\ref{eq:anti_exploration}, but with important differences.
In conservative Q-learning,
the $\delta$ value is not modeled and implicitly captured in the $\alpha$ hyperparameter. Equation~\ref{eq:anti_exploration} can be made more similar to Equation~\ref{eq:update} by explicitly conditioning on $\delta$, and setting $\delta_1 = \delta_2 = \delta$. We believe our approach offers the following advantages compared to using anti-exploration bonuses without conditioning.
First, tuning $\alpha$ in our approach is easier as we do not need to commit to a degree of conservatism beforehand. 
Also, by introducing an outer maximization over $\delta_1, \delta_2$, we see that for any iteration $k \in \mathbb{N}$, and any $\delta \in (0, 1)$, $\hat{Q}^{k+1}(\bs, \ba, \delta)$ as the solution to Equation~\ref{eq:update} is at least as tight of a lower-bound one that would set $\delta_1 = \delta_2 = \delta$. The latter is what Equation~\ref{eq:anti_exploration} implicitly does. 

\paragraph{Implicit bonuses via regularization.}
The objective in Equation~\ref{eq:update} requires explicit computation of anti-exploration bonuses, which requires computation of state-action visitations $n(\bs, \ba)^{-1}$ that we discuss in Section~\ref{sec:algorithm} is difficult with neural network value functions.
Here, we propose a new objective that is inspired by how CQL achieves pessimistic value functions \citep{kumar2020conservative}. The key idea is, instead of explicitly subtracting a bonus, we can add a regularizer in the objective. Specifically, we have the following iterative update as an alternative to \eqref{eq:update}:
\begin{align}
\label{eq:cql update}
\hat{Q}^{k+1} 
    &= \arg\min_Q 
    \max_{\delta_1, \delta_2 \leq \delta}
    \max_\pi \alpha \sqrt{\frac{\log(1 / \delta_1)}{(n(\bs) \wedge 1)}} \left(\E{\bs \sim D, \ba \sim \pi(\ba \mid s)}{Q(\bs, \ba, \delta)} - \E{\bs, \ba \sim D}{Q(\bs, \ba, \delta)} \right) \nonumber \\
    &\qquad + \frac{1}{2} \E{\bs, \ba, \bs' \sim D} {\left(Q(\bs, \ba, \delta) - \hat{\bellman}^*\,\hat{Q}^k(\bs, \ba, \delta_2)\right)^2}
    + \mathcal{R}(\pi) \,,
\end{align}
where like in \citet{kumar2020conservative}, $\mathcal{R}$ is some regularizer (typically the entropy of $\pi$).  
Note that Equation~\ref{eq:cql update} still relies on the computation of $n(\bs)^{-1}$. However, we noticed that estimating state visitations is actually much easier than state-action visitations with neural networks: we observed that state-action density estimators were insufficiently discriminative between seen and unseen actions at a given state, although state-only visitations, which do not require estimating densities of unseen samples were a bit more reliable (see Section~\ref{sec:algorithm} for details)
In Theorem~\ref{thm:cql lower bound}, we show that the resulting $\hat{Q}(\bs, \ba, \delta)$ may not point-wise lower-bound $Q^*(\bs, \ba)$, but will do so in expectation. Specifically, for $\hat{V}(\bs, \delta) = \max_{\ba} \hat{Q}(\bs, \ba, \delta)$, we have that $\hat{V}(\bs, \delta)$ lower-bounds the true value $V^*(\bs) = \max_{\ba} Q^*(\bs, \ba)$ with probability at least $1 - \delta$. 

The objective in Equation~\ref{eq:cql update} differs from the CQL update in Equation~\ref{eq:cql} in two notable aspects: (1) we explicitly condition on $\delta$ and introduce a maximization over $\delta_1, \delta_2$, and (2) rather than a fixed weight of $\alpha > 0$ on the CQL regularizer, the weight now depends on the state visitations. 
Like with Equation~\ref{eq:update}, we can argue that (1) implies that for any $k \in \mathbb{N}$, we learn at least as tight lower-bounds for any $\delta$ than the CQL update implicitly would. 
In addition, (2) means that the lower-bounds due to the CQL regularizer additionally depends on state visitations in $\data$, which will improve the quality of the obtained lower-bounds over standard CQL. 

\subsection{Confidence-Adaptive Policies}
\label{sec:adapt}
Given a learned Q-function, standard Q-learning would choose a stationary Markovian policy that selects actions according to $\hat{\pi}(\ba \mid \bs) = \I{\ba = \arg\max_{\ba} \hat{Q}(\bs, \ba)}$. 
We can na\"{i}vely do this with the learned confidence-conditioned Q-function by fixing $\delta$ and tuning it as a hyper-parameter. 
However, especially in offline RL, it can be preferable for the agent to change its behavior upon receiving new observations during online evaluation, as such observations can show that the agent has been behaving overly pessimistic or optimistic.
This adaptive behavior is enabled using our confidence-conditioned Q-function by adjusting $\delta$ using online observations. 

Let $h$ be the history of observations during online evaluation thus far. We propose a \emph{confidence-adaptive policy} that conditions the confidence $\delta$ under which it acts on $h$; namely, we propose a non-Markovian policy that selects actions as 
$
\hat{\pi}(\ba \mid \bs, h) =  \I{\ba = \arg\max_{\ba} \hat{Q}(\bs, \ba, \delta)} $,
where
$
\delta \sim \mathbf{b}(h)
$.
Here, $\mathbf{b}(h)$ is a distribution representing the ``belief" over which $\delta$ is best to evaluate actions for history $h$. Inspired by \citet{ghosh2022offline}, we compute $\mathbf{b}(h)$ using Bellman consistency \citep{xie2021bellman} as a surrogate log-likelihood. Here, the probability of sampling $\delta$ under $\mathbf{b}(h)$ is:
\begin{align}
\mathbf{b}(h)(\delta) \propto \sum_{(\bs, \ba, r, \bs') \in h} \left(\hat{Q}(\bs, \ba, \delta) - r - \gamma \max_{\ba'} \hat{Q}(\bs', \ba', \delta) \right)^2
\label{eq:delta_selection}
\end{align}
Note that this surrogate objective is easy to update. This leads to a tractable confidence-adaptive policy $\hat{\pi}$ that can outperform Markovian policies learned via conservative offline RL.

\subsection{Learning Lower- and Upper-Bounds}
\label{sec:lower and upper}
A natural extension of our method is to learn confidence-conditioned upper-bounds on the true Q-values.
Formally, as change of notation, let $Q_\ell(\bs, \ba, \delta)$ be the lower-bounds as defined in Equation~\ref{eq:value}. We can learn upper-bounds $Q_u(\bs, \ba, \delta)$ as
\begin{align}
Q_u(\bs, \ba, \delta) = \inf q \quad \text{s.t} \quad  
\mathsf{Pr}[Q^*(\bs, \ba) \leq q] \geq 1 - \delta\,.
\label{eq:upper value}
\end{align}
Following analogous logic as in Secton~\ref{sec:value learning}, we can derive an iterative update as
\begin{align}
    \hat{Q}_u^{k+1}
    &\!=\! \arg\min_Q \frac{1}{2} \E{\bs, \ba , \bs' \sim D}{\left(Q(\bs, \ba, \delta) \!-\! \min_{\delta_1, \delta_2 \leq \delta} \hat{\bellman}^* \,\hat{Q}_u^k(\bs, \ba, \delta_2) \!+\!  \alpha \sqrt{\frac{\log(1/\delta_1)}{n(\bs, \ba) \wedge 1}} \right)^2} \,.
\label{eq:upper update}
\end{align}
Learning both $\hat{Q}_\ell$ and $\hat{Q}_u$ presents the opportunity for improved policy extraction from the learned value functions. Instead of simply optimizing the learned lower-bounds, which may lead to overly conservative behavior, we can optimize the upper-bounds but constrained to \emph{safe} actions whose corresponding lower-bounds are not too low. Formally, our policy can perform
\begin{align}
\hat{\pi}(\ba \mid \bs, h) &=  \I{\ba = \arg\max_{\ba \in \actions_\ell} \hat{Q}_u(\bs, \ba, \delta)} \,, \nonumber 
\\
&\text{ where } 
\delta \sim \mathbf{b}(h)\,, 
\text{ and }
\actions_\ell = \left\{\ba: \hat{Q}_\ell(\bs, \ba, \delta) \geq \beta \, \max_{\ba'} \hat{Q}_\ell(\bs, \ba', \delta) \right\}\,,
\label{eq:upper policy}
\end{align}
for some parameter $\beta > 0$. To simplify notation, for the remainder of the paper, we again drop the subscript on $\ell$ when referencing lower-bounds. 
Learning upper-bounds offline is particularly important when fine-tuning the value functions on online interactions, which is a natural next-step after performing offline RL. Existing offline RL algorithms achieve strong offline performance, but lack the exploration necessary to improve greatly during online fine-tuning. By learning both lower- and upper-bounds, our method can achieve better online policy improvement (see Section~\ref{sec:finetuning}). 

\section{Practical Algorithm}
\label{sec:algorithm}

In this section, we describe implementation details for our CCVL algorithm, and arrive at a practical algorithm. We aim to resolve the following details: (1) how the confidence-conditioned Q-function is parameterized, and (2) how the objective in Equation~\ref{eq:update} or Equation~\ref{eq:cql update} is estimated and optimized. 

Our Q-function is parameterized by a neural network with parameters $\theta$. To handle conditioning on $\delta$, we build upon implicit quantile networks (IQN)~\citep{dabney2018implicit}, and propose a parametric model that can produce $\hat{Q}(\bs, \ba, \delta)$ for given values of $\delta$. Alternatively, we could fix quantized values of $\delta$, and model our Q-function as an ensemble where each ensemble member corresponds to one fixed $\delta$. We choose the IQN parameterization because training over many different $\delta \sim \mathcal{U}(0, 1)$ may lead to better generalization over confidences. 
However, when computing beliefs $\mathbf{b}$ online, we maintain a categorical distribution over quantized values of $\delta$. 

In Equation~\ref{eq:update} or Equation~\ref{eq:cql update}, we must compute the inverse state-action or state visitations. This can be exactly computed for tabular environments. However, in non-tabular ones, we need to estimate inverse counts $n(\bs, \ba)^{-1}$ or $n(\bs)^{-1}$. In prior work, \citet{donohue2018uncertainty} proposed obtaining linear-value estimates using the last layer of the neural network, \ie
$
n(\bs, \ba)^{-1} \approx \phi(\bs)^{\top} \left(\Phi_a^{\top}\Phi_a \right)^{-1} \phi(\bs)\,,
$
where $\phi$ extracts state representations, and $\Phi_a$ is a matrix of $\phi(\bs_i)$ for states $\bs_i \in \data$ where action $\ba$ was taken. 
However, we found that such methods were not discriminative enough to separate different actions in the dataset from others under the same state. 
Instead of state-action visitations, the update in Equation~\ref{eq:cql update} requires only estimating inverse state visitations $n(\bs)^{-1}$. Empirically, we find that linear estimates such as $n(\bs)^{-1} \approx \phi(\bs)^{\top} (\Phi^\top\Phi)^{-1}\phi(\bs)$ could successfully discriminate between states. Hence, we use the latter update when implementing CCVL in non-tabular environments.

Finally, we summarize our CCVL algorithm in Algorithm~\ref{alg:ccvl}. Note that aside from sampling multiple $\delta \sim \mathcal{U}(0, 1)$ for training, CCVL is no more computationally expensive than standard Q-learning, and is on the same order as distributional or ensemble RL algorithms that train on multiple Q-value estimations per state-action pair. Hence, our algorithm is very practical, while enabling adaptive non-Markovian policies as described in Section~\ref{sec:adapt}. 
\begin{algorithm}[H]
\caption{Confidence-Conditioned Value Learning (CCVL)}\label{alg:ccvl}
\begin{algorithmic}[1]
\REQUIRE Offline dataset $\data$, discount factor $\gamma$, weight $\alpha$, number of samples $N, M$
\STATE Initialize Q-function $\hat{Q}_\theta$
\FOR{step $t = 1, 2, \hdots, n$}
\FOR{$i = 1, 2, \hdots, N$}
\STATE Sample confidence $\delta \sim \mathcal{U}(0, 1)$
\STATE For $j = 1, \hdots, M$, sample $\delta_{j, 1}, \delta_{j, 2} \sim \mathcal{U}(0, \delta)$.
Compute $\mathcal{L}_j(\theta)$ as inner-term of right-hand side of \eqref{eq:update} or \eqref{eq:cql update}
with $\delta_1 = \delta_{j, 1}, \delta_2 = \delta_{j, 2}$
\STATE 
Take gradient step
$
\theta_t := \theta_{t-1} - \eta \nabla_\theta \max_j \mathcal{L}_j(\theta)
$
\ENDFOR
\ENDFOR
\STATE Return Q-function $\hat{Q}_\theta$
\end{algorithmic}
\end{algorithm}

\section{Theoretical Analysis}
\label{sec:analysis}

In this section, we show that in a tabular MDP, the value functions learned by CCVL properly estimate lower-bounds of the true value, for any confidence $\delta$. We show this for both the update using anti-exploration bonuses in \eqref{eq:update} as well as the one using regularization in Equation~\ref{eq:cql update}. 

First, we show a simple lemma that CCVL will learn a value function such that the values decrease as the confidence level increases. Formally, we show the following:
\begin{lemma}
The Q-values $\hat{Q}$ learned via CCVL satisfy, for any $\delta, \delta' \in (0, 1)$ such that $\delta \leq \delta'$:
$
    \hat{Q}(\bs, \ba, \delta) \leq \hat{Q}(\bs, \ba, \delta')
$.
\label{lem:monotonicity}
\end{lemma}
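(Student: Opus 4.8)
The plan is to read off the monotonicity directly from the algebraic structure of the CCVL backup, rather than propagating it through the Bellman recursion. The key observation is that the training target for a confidence level $\delta$ is a maximization over the feasible set $\{(\delta_1,\delta_2) : \delta_1, \delta_2 \leq \delta\}$, and this set only grows as $\delta$ increases; since the quantity being maximized does not itself depend on $\delta$ (only the domain of maximization does), the target is automatically non-decreasing in $\delta$.

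Concretely, I would first note that in the tabular setting the squared objective in Equation~\ref{eq:update} decouples over each triple $(\bs,\ba,\delta)$, so its minimizer matches the target pointwise:
\[
\hat{Q}^{k+1}(\bs,\ba,\delta) \;=\; \max_{\delta_1,\delta_2 \leq \delta} \; g_k(\bs,\ba,\delta_1,\delta_2), \qquad g_k(\bs,\ba,\delta_1,\delta_2) := \hat{\bellman}^* \hat{Q}^k(\bs,\ba,\delta_2) + \alpha \sqrt{\tfrac{\log(1/\delta_1)}{n(\bs,\ba)\wedge 1}}.
\]
Fixing $\delta \leq \delta'$, the domain $\{(\delta_1,\delta_2): \delta_1,\delta_2 \leq \delta\}$ is contained in $\{(\delta_1,\delta_2): \delta_1,\delta_2 \leq \delta'\}$, so maximizing the same function $g_k$ over the larger domain can only increase the value, yielding $\hat{Q}^{k+1}(\bs,\ba,\delta) \leq \hat{Q}^{k+1}(\bs,\ba,\delta')$. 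Notably this holds for every iterate $k \geq 0$ and, because the envelope is taken over all $\delta_2 \leq \delta$, does \emph{not} require $\hat{Q}^k$ itself to be monotone in $\delta$; hence no inductive hypothesis on the previous iterate is needed. I would then pass to the limit $k \to \infty$, using that pointwise limits preserve non-strict inequalities, to conclude the same bound for $\hat{Q} = \lim_k \hat{Q}^k$.

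Finally, I would observe that the regularized update in Equation~\ref{eq:cql update} carries exactly the same $\max_{\delta_1,\delta_2\leq\delta}$ envelope, so the identical subset argument applies verbatim. I do not expect a serious obstacle: the only points warranting care are (i) the claim that the exact tabular least-squares minimizer coincides with the target pointwise, which is what lets us treat the update as a deterministic map on $\hat{Q}$, and (ii) that the iterates converge so $\hat{Q}$ is well-defined---a fact that is needed for the lower-bound theorems anyway and can simply be invoked here.
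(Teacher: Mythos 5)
Your proof is correct and takes essentially the same route as the paper's: both arguments rest on the observation that the feasible set $\{(\delta_1,\delta_2):\delta_1,\delta_2\leq\delta\}$ is nested in the one for $\delta'$, so the maximized target can only increase (you simply spell out the tabular decoupling, the per-iterate validity without induction, and the passage to the limit, which the paper leaves implicit). One immaterial slip: in Equation~\ref{eq:update} the bonus term is \emph{subtracted} from the backed-up value rather than added as in your definition of $g_k$, but since the set-inclusion argument is agnostic to the form of the objective being maximized, this does not affect the conclusion.
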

\begin{proof}
Let $\delta_1, \delta_2 \leq \delta$ be the solution to the maximization for $\hat{Q}(\bs, \ba, \delta)$ in Equation~\ref{eq:update}. Since $\delta \leq \delta'$, we have $\delta_1, \delta_2 \leq \delta'$. This implies $\hat{Q}(\bs, \ba, \delta) \leq \hat{Q}(\bs, \ba, \delta')$, as desired.
\vspace{-0.1in}
\end{proof}
Lemma~\ref{lem:monotonicity} means that as $\delta$ decreases, which equates to estimating a lower bound of higher confidence, our estimated Q-values will monotonically decrease. Using Lemma~\ref{lem:monotonicity} allows us to show the following theorems, which are the main results of this section. We state the results below, and defer proofs to the Appendix~\ref{sec:proofs}.

The first shows that using when \eqref{eq:update}, our value-function estimates, for any confidence $\delta \in (0, 1)$, a proper lower-bound on the optimal Q-values with probability at least $1 - \delta$. 

\begin{theorem}
For any $\delta \in (0, 1)$, the Q-values $\hat{Q}$ learned via CCVL with Equation~\ref{eq:update} satisfies
\begin{align*}
\hat{Q}(\bs, \ba, \delta) \leq Q^*(\bs, \ba)
\quad\text{for all states
$\bs \in \states$, and actions $\ba \in \actions$}
\end{align*}
with probability at least $1 - \delta$ for some $\alpha > 0$.
\label{thm:lower bound}
\end{theorem}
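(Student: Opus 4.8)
The plan is to recast the fixed-point equation that defines $\hat{Q}(\cdot,\cdot,\delta)$ as the limit of a monotone contraction, and then to exhibit $Q^*$ as a super-solution of that contraction on a high-probability event. First I would reduce the $\arg\min$ in Equation~\ref{eq:update} to a pointwise operator: in the tabular setting the squared loss decouples into an independent term per state-action pair weighted by its empirical frequency (with the $n \wedge 1$ convention handling unseen pairs), so the minimizer simply sets $\hat{Q}^{k+1}(\bs,\ba,\delta)$ equal to the target. Hence the update is $\hat{Q}^{k+1} = \mathcal{T}\hat{Q}^k$ for the operator $\mathcal{T} Q(\bs,\ba,\delta) = \max_{\delta_1,\delta_2\leq\delta}[\hat{\bellman}^* Q(\bs,\ba,\delta_2) - \alpha\sqrt{\log(1/\delta_1)/(n(\bs,\ba)\wedge 1)}]$. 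I would then check that $\mathcal{T}$ is monotone (it is built from the monotone empirical backup $\hat{\bellman}^*$, a pointwise $\max$ over $\delta_2$, and subtraction of a $Q$-independent bonus) and a $\gamma$-contraction in the sup-norm over $(\bs,\ba,\delta)$ (using $|\max_x f - \max_x g|\le\max_x|f-g|$ to absorb the $\delta_2$-maximization, and the fact that the bonus does not depend on $Q$). This guarantees the limit $\hat{Q}=\lim_k \hat{Q}^k$ exists, is the unique fixed point, and is independent of initialization.

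Next I would define the high-probability event. Because the anchor $Q^*$ is a fixed, non-random function, I can apply a Chernoff-Hoeffding (or Bernstein) bound to the empirical backup $\hat{\bellman}^* Q^*(\bs,\ba)$ around $\bellman^* Q^*(\bs,\ba)$ at each $(\bs,\ba)$ and union-bound over $\states\times\actions$, folding the $\log(|\states||\actions|)$ and horizon factors into $\iota=\polylog(|\states|,(1-\gamma)^{-1},N)$. Choosing $\alpha$ to be a large enough multiple of $\sqrt{\iota}$, this yields an event $\mathcal{E}_\delta$ with $\prob{\mathcal{E}_\delta}\ge 1-\delta$ on which $\hat{\bellman}^* Q^*(\bs,\ba) - \bellman^* Q^*(\bs,\ba) \le \alpha\sqrt{\log(1/\delta)/(n(\bs,\ba)\wedge 1)}$ simultaneously for all $(\bs,\ba)$. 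The crucial design point is that I only ever invoke concentration for the deterministic function $Q^*$, never for the data-dependent iterates, which sidesteps any correlation between the samples used inside $\hat{\bellman}^*$ and those defining $\hat{Q}^k$.

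Finally I would run the super-solution argument on $\mathcal{E}_\delta$. Let $\tilde{Q}(\bs,\ba,\cdot)\equiv Q^*(\bs,\ba)$ be the extension of $Q^*$ that is constant in $\delta$. Evaluating $\mathcal{T}\tilde{Q}$, the $\delta_2$-maximization is vacuous and the $\delta_1$-maximization selects the smallest bonus, attained at $\delta_1=\delta$ since $\log(1/\delta_1)$ decreases in $\delta_1$; thus $\mathcal{T}\tilde{Q}(\bs,\ba,\delta) = \hat{\bellman}^* Q^*(\bs,\ba) - \alpha\sqrt{\log(1/\delta)/(n(\bs,\ba)\wedge 1)} \le \bellman^* Q^*(\bs,\ba) = Q^*(\bs,\ba) = \tilde{Q}(\bs,\ba,\delta)$ on $\mathcal{E}_\delta$. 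Hence $\tilde{Q}$ is a super-solution, $\mathcal{T}\tilde{Q}\le\tilde{Q}$; iterating the monotone contraction from $\tilde{Q}$ produces a decreasing sequence converging to the unique fixed point, so $\hat{Q}(\bs,\ba,\delta)\le\tilde{Q}(\bs,\ba,\delta)=Q^*(\bs,\ba)$ for all $(\bs,\ba)$ on $\mathcal{E}_\delta$, proving the claim with probability at least $1-\delta$.

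I expect the main obstacle to be the bookkeeping that lets a single $(1-\delta)$ event suffice despite the maximization over all $\delta_1,\delta_2\le\delta$. Two structural facts close the probability budget without any extra union bound over confidence levels: the monotonicity of the bonus in $\delta_1$, so that only the weakest concentration bound (the one at $\delta_1=\delta$) must hold, and Lemma~\ref{lem:monotonicity} together with the decreasing-iteration structure, so that the $\delta_2$-backup of every iterate never exceeds $\hat{\bellman}^* Q^*$. Verifying the contraction and monotonicity of $\mathcal{T}$ cleanly through the coupled $\max_{\delta_1,\delta_2}$, and making the union bound quantitatively consistent with the stated form $b(\bs,\ba,\delta)=\sqrt{\iota\log(1/\delta)/(n(\bs,\ba)\wedge 1)}$, are the steps that require the most care.
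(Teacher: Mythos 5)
Your proof is correct, but it reaches the conclusion by a genuinely different route than the paper. The paper argues directly at the fixed point: it takes the maximizing $\delta_1,\delta_2\le\delta$ at convergence, applies the concentration lemma to $\hat{\bellman}^*\hat{Q}(\cdot,\cdot,\delta_2)$ to replace the empirical backup by the true one (absorbing the error into the bonus via $\alpha\ge\iota^{1/2}$), invokes Lemma~\ref{lem:monotonicity} to pass from $\delta_2$ to $\delta$ so that $\hat{Q}(\cdot,\cdot,\delta)\le\bellman^*\hat{Q}(\cdot,\cdot,\delta)$, and then unrolls $(I-\gamma P^*)^{-1}$ to conclude $\hat{Q}\le Q^*$. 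You instead package the update as a monotone $\gamma$-contraction $\mathcal{T}$, exhibit the constant-in-$\delta$ extension of $Q^*$ as a super-solution on a high-probability event, and let monotone iteration carry you down to the unique fixed point. Your version buys two things the paper's write-up elides: (i) concentration is only ever invoked for the fixed, non-random function $Q^*$, whereas the paper applies Lemma~\ref{lem:concentration} to the data-dependent iterate $\hat{Q}$, which strictly speaking requires a uniform or anchored argument of exactly the kind you give; and (ii) because your super-solution check only needs the bound at $\delta_1=\delta$ (the weakest bonus), you avoid the selection issue that the maximizing $\delta_1$ in the paper's proof is itself data-dependent. The paper's argument is shorter and makes the role of Lemma~\ref{lem:monotonicity} explicit (your route does not need it at all, since $Q^*$ is constant in $\delta$), but yours is the more airtight of the two; just be sure to state the union bound over $\states\times\actions$ that folds into $\iota$, since Lemma~\ref{lem:concentration} as stated is per state-action pair.
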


The second theorem shows an analogous result to Theorem~\ref{thm:lower bound}, but using the update in \eqref{eq:cql update} instead. However, using the alternative update does not guarantee a pointwise lower-bound on Q-values for all state-action pairs. However, akin to \citet{kumar2020conservative}, we can show a lower-bound on the values for all states. 

\begin{theorem}
For any $\delta \in (0, 1)$, the value of the policy $\hat{V}(\bs, \delta) = \max_{\ba \in \actions} \hat{Q}(\bs, \ba, \delta)$, where $\hat{Q}$ are learned via CCVL with Equation~\ref{eq:cql update} satisfies
\begin{align*}
\hat{V}(\bs, \delta) \leq V^*(\bs)
\quad\text{
for all states $\bs \in \states$}
\end{align*}
where $V^*(\bs) = \max_{\ba \in \actions} Q^*(\bs, \ba)$ with probability at least $1 - \delta$ for some $\alpha > 0$.
\label{thm:cql lower bound}
\end{theorem}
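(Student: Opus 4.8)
The plan is to adapt the lower-bound argument of \citet{kumar2020conservative} and of Theorem~\ref{thm:lower bound} to the confidence-conditioned, state-reweighted regularizer in Equation~\ref{eq:cql update}. I would work in the tabular setting, where each value $\hat{Q}(\bs, \ba, \delta)$ is an independent free parameter, so the fixed point of the update is characterized by first-order stationarity. Fixing the inner maximizers $\delta_1^\star, \delta_2^\star, \pi^\star$ (which exist by compactness of $(0,\delta]^2$ and of the simplex), I would differentiate the objective with respect to $Q(\bs, \ba, \delta)$ and set the gradient to zero. Writing the state-dependent weight as $w(\bs, \delta_1) = \alpha\sqrt{\log(1/\delta_1)/(n(\bs) \wedge 1)}$ and dividing through by the empirical state-action density $\hat{d}(\bs, \ba) = \hat{d}(\bs)\,\hat{\pi}_\beta(\ba \mid \bs)$, the stationarity condition yields the closed form
\[
\hat{Q}(\bs, \ba, \delta) = \hat{\bellman}^* \hat{Q}(\bs, \ba, \delta_2^\star) - w(\bs, \delta_1^\star)\,\frac{\pi^\star(\ba \mid \bs) - \hat{\pi}_\beta(\ba \mid \bs)}{\hat{\pi}_\beta(\ba \mid \bs)}\,,
\]
which mirrors the CQL fixed point but with $\alpha$ replaced by the confidence- and state-dependent weight.

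Next I would pass from the Q-values to the value $\hat{V}(\bs, \delta) = \max_\ba \hat{Q}(\bs, \ba, \delta)$ by taking the expectation under the maximizing policy $\pi^\star$. The crucial observation, as in CQL, is that the penalty need not be nonnegative pointwise in $\ba$ (explaining why the theorem is stated only for $\hat{V}$ and not pointwise in $(\bs,\ba)$), but its $\pi^\star$-average is: using $\sum_\ba \pi^\star(\ba\mid\bs) = 1$ one obtains the identity $\sum_\ba \pi^\star(\ba\mid\bs)\frac{\pi^\star(\ba\mid\bs) - \hat{\pi}_\beta(\ba\mid\bs)}{\hat{\pi}_\beta(\ba\mid\bs)} = \sum_\ba \frac{(\pi^\star(\ba\mid\bs) - \hat{\pi}_\beta(\ba\mid\bs))^2}{\hat{\pi}_\beta(\ba\mid\bs)} \geq 0$, with equality iff $\pi^\star(\cdot\mid\bs) = \hat{\pi}_\beta(\cdot\mid\bs)$. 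This strictly positive penalty (whenever the optimized policy departs from the behavior policy) is the source of conservatism that drives $\hat{V}$ below $V^*$.

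I would then control the empirical Bellman error. Using the anti-exploration bonus property $\prob{\abs{\bellman^* Q^*(\bs,\ba) - \hat{\bellman}^* Q^*(\bs, \ba)} \le b(\bs, \ba, \delta)} \ge 1-\delta$ together with the monotonicity of $\hat{\bellman}^*$ and Lemma~\ref{lem:monotonicity} (so that once $\hat{Q}(\cdot, \delta_2^\star) \le Q^*$ we get $\hat{\bellman}^* \hat{Q}(\cdot, \delta_2^\star) \le \hat{\bellman}^* Q^* \le Q^* + b$ with high probability), I would run an induction over iterations $k$ initialized at $\hat{Q}^0 \le Q^*$ and invoke the $\gamma$-contraction of the backup to pass to the fixed point $\hat{Q} = \lim_k \hat{Q}^k$. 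Combining with the nonnegative penalty, the value recursion gives $\E{\ba\sim\pi^\star}{\hat{Q}(\bs,\ba,\delta)} \le V^*(\bs) + \sum_t \gamma^t\big(b(\bs_t,\ba_t,\delta) - w(\bs_t,\delta_1^\star)\cdot\text{(penalty)}\big)$, and choosing $\alpha$ large enough that the reweighted penalty dominates the accumulated bonus yields the bound. The constraint $\delta_1 \le \delta$ guarantees $\log(1/\delta_1) \ge \log(1/\delta)$, so the relevant concentration events hold at confidence at least $1-\delta$; a union bound over $\states$ and over the effectively $O((1-\gamma)^{-1})$ iterations is absorbed into the polylogarithmic factor $\iota$, delivering the guarantee simultaneously for all $\bs$ with probability at least $1-\delta$.

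I expect the main obstacle to be twofold. First, the interaction between the inner $\max$ over $\delta_1,\delta_2$ and the probability budget: because the regularizer only yields conservatism in aggregate under $\pi^\star$, the argument cannot be made pointwise as in Theorem~\ref{thm:lower bound}, and I must ensure that for the maximizing pair $(\delta_1^\star,\delta_2^\star)$ — which by Lemma~\ref{lem:monotonicity} and monotonicity of the bonus in $\delta_1$ sits at the ``loosest'' end of the feasible set — the concentration event is still covered at level $1-\delta$. Second, relating the stated object $\hat{V} = \max_\ba \hat{Q}$ to the $\pi^\star$-expectation that the identity above controls: these differ by the regularizer gap $\mathcal{R}(\pi^\star)$ (a log-sum-exp/entropy term when $\mathcal{R}$ is the entropy), so I would either choose $\mathcal{R}$ so that $\pi^\star$ concentrates on the greedy action or absorb this gap into the threshold on $\alpha$. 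Pinning down that threshold (which must beat the sampling error uniformly over states, accounting for the mismatch between the state counts $n(\bs)$ in the weight and the state-action counts $n(\bs,\ba)$ in the bonus) and carrying the union-bound bookkeeping through $\iota$ is the delicate, if routine, remainder.
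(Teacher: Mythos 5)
Your skeleton matches the paper's proof up to the decisive step: both derive the same tabular stationarity condition $\hat{Q}(\bs,\ba,\delta) \le \hat{\bellman}^*\hat{Q}(\bs,\ba,\delta_2) - \alpha\sqrt{\log(1/\delta_1)/n(\bs)}\,\bigl[\pi(\ba\mid\bs)/\pi_\beta(\ba\mid\bs)-1\bigr]$, both rewrite the state count $n(\bs)$ into the state-action count via $n(\bs,\ba)=n(\bs)\pi_\beta(\ba\mid\bs)$ so that the penalty and the concentration bonus of Lemma~\ref{lem:concentration} live on the same scale, and both close by taking $\alpha$ large enough that the penalty dominates the sampling error and then passing from $\hat{V}\le\bellman^*\hat{V}$ to $\hat{V}\le V^*$. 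Where you diverge is in how the sign of the penalty is controlled. You average the fixed-point identity under $\pi^\star$ and use the chi-square identity $\sum_\ba \pi^\star(\ba\mid\bs)\frac{\pi^\star(\ba\mid\bs)-\pi_\beta(\ba\mid\bs)}{\pi_\beta(\ba\mid\bs)} = \sum_\ba\frac{(\pi^\star(\ba\mid\bs)-\pi_\beta(\ba\mid\bs))^2}{\pi_\beta(\ba\mid\bs)}\ge 0$; this is the original CQL route and it upper-bounds $\mathbb{E}_{\ba\sim\pi^\star}[\hat{Q}(\bs,\ba,\delta)]$. But the theorem concerns $\hat{V}(\bs,\delta)=\max_\ba\hat{Q}(\bs,\ba,\delta)\ge\mathbb{E}_{\ba\sim\pi^\star}[\hat{Q}(\bs,\ba,\delta)]$, so your bound sits on the wrong side of the quantity you need, and the gap cannot in general be absorbed into $\alpha$: when $\pi^\star=\pi_\beta$ your aggregate penalty vanishes identically while $\max_\ba\hat{Q}-\mathbb{E}_{\pi_\beta}[\hat{Q}]$ need not. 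You correctly flag this as the main obstacle, and one of your proposed fixes is essentially the paper's resolution: evaluate the fixed-point identity pointwise at $\ba^*=\arg\max_\ba\hat{Q}(\bs,\ba,\delta)$ and observe that, because $\pi$ maximizes the learned Q-values, $\pi(\ba^*\mid\bs)\ge\pi_\beta(\ba^*\mid\bs)$, so the penalty is nonnegative at exactly the action that defines $\hat{V}$ --- no averaging, no entropy-gap bookkeeping. With that substitution your argument coincides with the paper's; without it (or without forcing $\pi^\star$ to be exactly greedy by dropping $\mathcal{R}$), the expectation-based version does not establish the stated claim about $\max_\ba\hat{Q}$.
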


\vspace{-0.1in}
\section{Empirical Evaluation}
\label{sec:experiments}
\vspace{-0.1in}

In our experiments, we aim to evaluate our algorithm, CCVL on discrete-action offline RL tasks.
We use the iterative update in Equation~\ref{eq:cql update}, as it achieves stabler performance when the Q-function is a neural network. 
We aim to ascertain whether the two distinct properties of our method lead to improved performance: (1) conditioning on confidence $\delta$ during offline training,
and (2) adapting the confidence value $\delta$ during online rollouts. We compare to prior offline RL methods, REM \citep{agarwal2019optimistic} and CQL \citep{kumar2020conservative}, and ablations of our method where we either replace confidence-conditioning with a simple ensemble, which we dub \emph{adaptive ensemble value-learning} (AEVL), or behave according to a fixed confidence online, which we call Fixed-CCVL.

\textbf{Comparisons.} REM and CQL are existing state-of-the-art offline RL algorithms for discrete-action environments. AEVL allows us to study question (1) by replacing confidence-conditioned values with a random ensemble, where each model in the ensemble roughly has the same level of conservatism. Each ensemble member of AEVL is trained independently using different initial parameters, and which ensemble member to act under is controlled online using Bellman error as in our proposed method. Note that AEVL can be viewed as a special case of APE-V in \citet{ghosh2022offline} for discrete-action domains.
Finally, Fixed-CCVL tests (2) by treating confidence $\delta$ used by the policy as a fixed hyper-parameter instead of automatically adjusting it during online rollouts. The confidence is selected as the one that minimized Bellman error during offline training.
Because AEVL and CCVL change their behavior during evaluation, we maintain a fair comparison by reporting the average score across the adaptation process, including episodes where adaptation has not yet converged.

\subsection{Illustrative Example on Gridworld}
We first present a didactic example that illustrates the benefit of CCVL over standard conservative offline RL algorithms.
We consider a $8 \times 8$ gridworld environment~\citep{fu2019diagnosing}, with a start and goal state, walls, lava. The reward is $1$ upon reaching the goal, but entering a lava state results in receiving a reward of $0$ for the rest of the trajectory.
We consider an offline RL task where the learned policy must generalize to a slightly different gridworld environment than the one it was trained on. In our case, during offline training, the environment is stochastic, in that there is a $30\%$ chance that the agent travels in an unintended direction; however, during evaluation, that probability decreases to $15\%$. This makes previously risky paths more optimal.
This is where we anticipate that adaptive methods such as ours will have a severe advantage. While CQL will act too conservatively, our method CCVL can evaluate and change its level of conservatism on the fly.

\begin{wrapfigure}{r}{0.6\linewidth}
\vspace{-0.2in}
\includegraphics[width=0.49\linewidth]{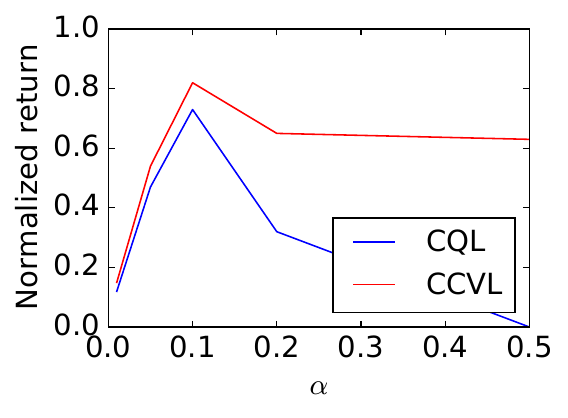}
\includegraphics[width=0.49\linewidth]
{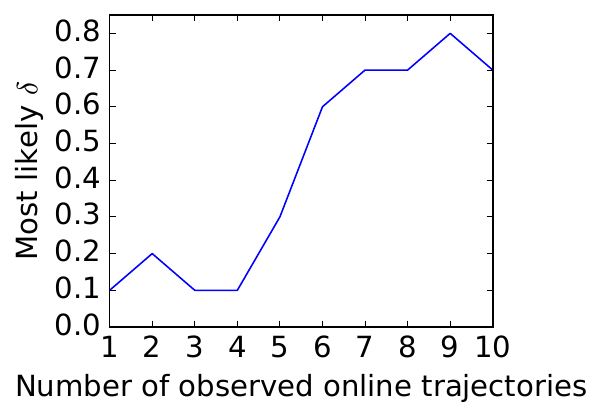}
\vspace{-0.2in}
\caption{\emph{Left.} Effect of $\alpha$ on normalized returns of CQL and CCVL. \emph{Right.} Adaptation of $\delta$ under CCVL.}
\label{fig:gridworld_analysis}
\vspace{-0.2in}
\end{wrapfigure}
We construct an offline dataset consisting of $2.5$k samples from a behavior policy, which takes the optimal action with probability $0.5$, and a random action otherwise. 
In Figure~\ref{fig:gridworld_analysis}, we show the returns of CQL and CCVL (normalized by optimal return) for various choices of $\alpha$. We see that because CCVL does not commit to a degree of conservatism beforehand, it does not suffer from overly conservative behavior as CQL does when $\alpha \geq 0.2$. 
For $\alpha = 0.2$, we also visualize the process of CCVL adapting $\delta$ over $10$ evaluation trajectories, ultimately becoming less conservative.
Finally, in Figure~\ref{fig:gridworld_example}, we see that for large settings of $\alpha$, CQL is unable to recover the optimal trajectory--instead learning the most likely trajectory in the dataset--whereas CCVL can.

\begin{figure}
\centering
  \includegraphics[width=0.27\linewidth]{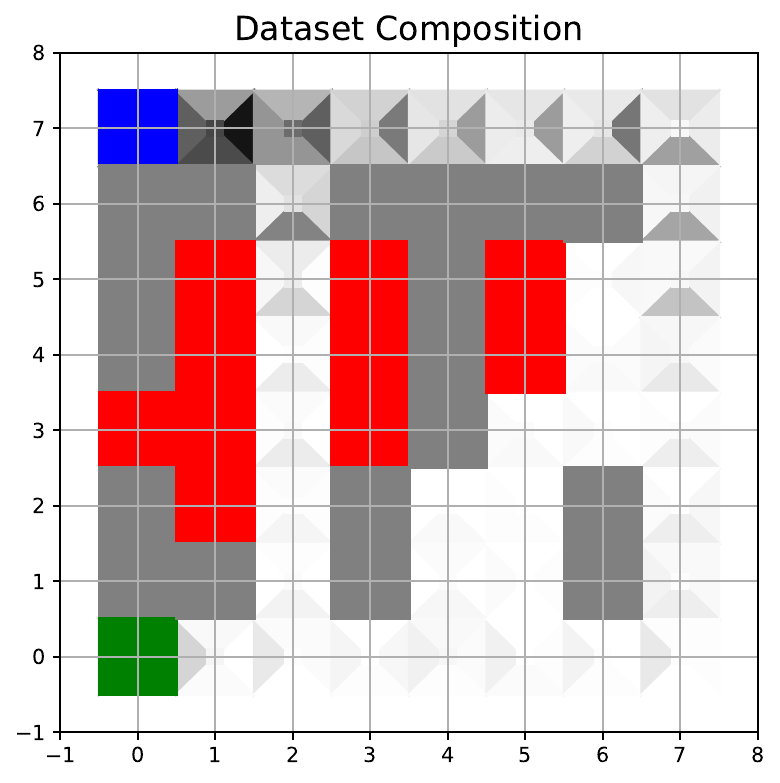}
  \includegraphics[width=0.27\linewidth]{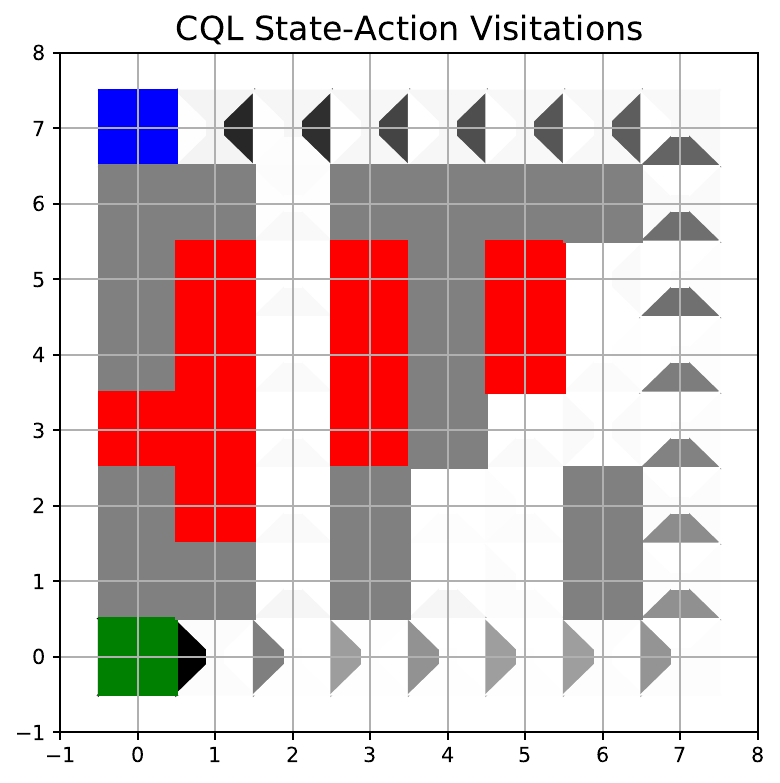}
  \includegraphics[width=0.27\linewidth]{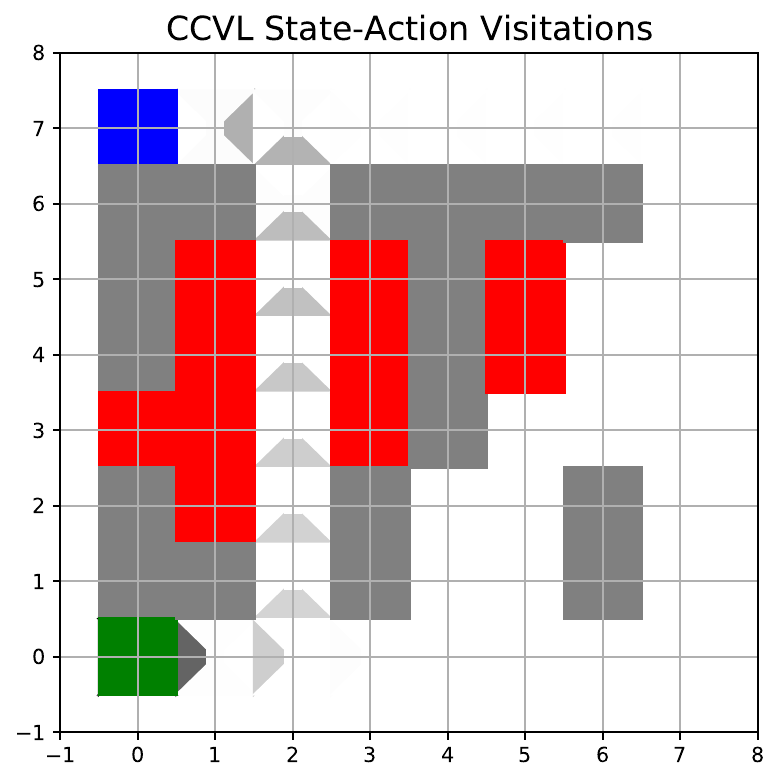}
\caption{Example gridworld where CQL takes the longer, suboptimal trajectory that appears more frequently in the dataset, but CCVL ultimately adapts $\delta$ and takes the optimal one.}
\label{fig:gridworld_example}
\vspace{-0.2in}
\end{figure}

\subsection{Offline Training on Atari}

Next, we evaluate our algorithm against prior methods on Atari games \citep{bellemare2013ale} with offline datasets of varying size and quality, previously considered by \citet{agarwal2019optimistic, kumar2020conservative}.
We follow the exact setup of \citet{kumar2022dr3}, including evaluating across the same set of $17$ games, using the same three offline datasets, with $1\%$ and $5\%$ of samples uniformly drawn from DQN replay dataset introduced in \citet{agarwal2019optimistic}, as well as a more suboptimal dataset consisting of $10\%$ of the initial samples from the DQN dataset (corresponding to the first $20$M observations during online DQN). Including this more suboptimal dataset allows us to evaluate the degree to which each method can improve over the average performance in the dataset.
Following \citet{agarwal2019optimistic}, the Atari games have stochastic dynamics, with a $25\%$ chance of ``sticky actions," \ie executing the previous action instead of a new one.

The REM and CQL baselines use exactly the hyperparamter configurations used by \citet{kumar2022dr3}. We refer to Table E.1 of \citet{kumar2022dr3} for a table of hyperparamters used. 
Across all methods, we found it useful to perform DR3 regularization on the learned state representations \citep{kumar2022dr3}. Following \citet{agarwal2021precipice}, we report the interquartile mean (IQM) normalized scores, where the normalization gives score $0$ to a random policy and $100$ to the nature DQN \citep{mnih2015human}, and each score is computed using the average of $100$ episodes. We also report $95\%$ confidence intervals (CIs) computed using stratified bootstrapping. The results across all $17$ games for the three datasets are in Table~\ref{tab:atari_means}. We also show complete per-game results in Tables~\ref{tab:atari_1percent}-\ref{tab:atari_10percent}.

Note that our method CCVL outperforms all baselines that we evaluate against. Though the average improvement across all games is small, we see that CCVL sometimes outperforms REM and CQL by over $30\%$ for games such as Asterix or Breakout. We believe this is because REM and CQL can only act according to a fixed level of conservatism across all games, whereas CCVL is able to adapt its level on a per-game basis. We also notice that CCVL outperforms both ablations, showing that both confidence-conditioning and adaptation are important to the success of our algorithm. Though AEVL is adaptive, because the ensemble members do not represent diverse hypotheses about how to act optimally, adaptation is not useful. Perhaps unsurprisingly, Fixed-CCVL and CQL perform similarly due to the similarities in the objective in Equation~\ref{eq:cql update} and Equation~\ref{eq:cql}. However, CCVL greatly improves over Fixed-CCVL due to being able to adapt the $\delta$ used by the policy online. 

\begin{table}[t]
\centering
\small{\begin{tabular}{l|c c|c c | c}
\toprule
\textbf{Data} &\textbf{REM} & \textbf{CQL} &\textbf{AEVL} &\textbf{Fixed-CCVL} & \textbf{CCVL} \\ \midrule
 $1\%$  & $16.5$ & $56.9$ & $15.2$ &  $56.2$ 
 & $\bm{59.1}$
 \\
 & $\mbox{\footnotesize(14.5, 18.6)}$ & $\mbox{\footnotesize(52.5, 61.2)}$ & $\mbox{\footnotesize(53.0, 60.8)}$ &
 $\mbox{\footnotesize(52.0, 61.4)}$ &
 $\bm{\mbox{\footnotesize(51.8, 65.6)}}$ \\
 $5\%$  & $60.2$ & $105.7$ & $57.2$ & $105.9$
 & $\bm{110.1}$ \\
 & $\mbox{\footnotesize(101.9, 110.9)}$ & $\mbox{\footnotesize(55.8, 65.1)}$ & $\mbox{\footnotesize(50.9, 63.6)}$ &
 $\mbox{\footnotesize(102.3, 109.9)}$ &
 $\bm{\mbox{\footnotesize(101.2, 117.4)}}$ \\
 Initial $10\%$ & $73.8$  
 & $65.8$& $75.3$ & $64.7$ & $\bm{77.8}$\\
 & $\mbox{\footnotesize(69.3, 78)}$ & $\mbox{\footnotesize(63.3, 68.3)}$ & 
 $\mbox{\footnotesize(68, 79.5)}$ &
 $\mbox{\footnotesize(62.7, 67.9)}$ &
 $\bm{\mbox{\footnotesize(69.1, 87.2)}}$ \\
\bottomrule
 \end{tabular}}
\caption{Final performance across $17$ Atari games after $6.25$M gradient updates on $1\%$ data and $12.5$M for $5\%, 10\%$ in terms of normalized IQM across $5$ random seeds, with $95\%$ stratified bootstrap CIs in parentheses. REM and CQL results are from \citet{kumar2022dr3}. Our method CCVL outperforms prior baselines and ablations across all three datasets.}
\label{tab:atari_means}
\vspace{-0.2in}
\end{table}

\subsection{Online Fine-Tuning on Atari}
\label{sec:finetuning}
It is often realistic to consider that the value functions obtained by offline RL can be improved additional online interactions, which we call online fine-tuning. Our CCVL method, when extended to learn both lower- and upper-bounds as discussed in Section~\ref{sec:lower and upper}, is well-suited for this setting. This is because our approach can leverage lower-bounds to act pessimistically offline, while using upper-bounds for online exploration.
Note that these experiments include additional training with online RL for \emph{all} methods. Like in previous experiments, all methods receive the same exact amount of data, but must now perform online exploration themselves.

We select $5$ representative Atari games, similarly considered in \citet{kumar2020conservative}. We first run offline training across all algorithms on the $1\%$ dataset for $6.25$M gradient steps,
\begin{wraptable}{r}{0.65\linewidth}
\centering
\small{\begin{tabular}{l|c c c}
\toprule
\textbf{Game} &\textbf{REM} & \textbf{CQL} & \textbf{CCVL} \\ \midrule
 Asterix  & $4.3 \to 45.2$ & $18.6 \to 52.7$ & $\bm{25.9 \to 159.5}$\\
 Breakout & $\bm{1.2 \to 204.2}$ & $2.8 \to 193.7$ & $2.7 \to 202.7$ \\
 Pong & $36.4 \to 113.4$ & $100.0 \to 111.6$ & $\bm{105.2 \to 117.9}$  \\
 Seaquest & $13.9 \to 51.2$& $24 \to 60.7$& $\bm{30.9 \to 77.8}$\\
 Qbert  & $3.4 \to 120.4$& $111.2 \to 118.9$ & $\bm{111.3 \to 139.7}$\\
\bottomrule
 \end{tabular}}
\caption{Improvement in normalized IQM final performance after 625k additional gradient steps of online fine-tuning.}
\label{tab:atari_finetuning}
\vspace{-0.1in}
\end{wraptable}
then run $625$k steps of online RL, and report the final performance. 
We report the gain in normalized IQM after online fine-tuning in Table~\ref{tab:atari_finetuning}. Our method, CCVL, achieves the best score across $4$ of $5$ games. Though CQL often achieves the second best overall score, it often sees the smallest improvement, as conservatism is detrimental to exploration.  

\vspace{-0.1in}
\section{Conclusion}
\vspace{-0.1in}
In this work, we propose confidence-conditioned value learning (CCVL), a offline RL algorithm that learns a value function for all degrees of conservatism, called confidence-levels. Contrary to standard offline RL algorithms like CQL that must specify a degree of conservatism during training via hyperparameter tuning, CCVL enables condition-adaptive policies that adjust this degree using online observations. CCVL can be implemented practically, using slight modifications on top of existing offline RL algorithms. Theoretically, we show that in a tabular environment, CCVL, for any confidence-level, learns appropriate value that is a lower-bound at that confidence. Empirically, we demonstrate that in discrete-action environments, CCVL performs better than prior methods. 
We view CCVL as a first-step in proposing conservative offline RL algorithms that adjust their level of conservatism, rather than having the level tuned beforehand via an opaque hyperparameter. Many angles for further investigation exist. Theoretically, it remains to see whether the confidence-conditioned values are lower-bounds under function approximation. 
Algorithmically, an important direction of future work is to extend CCVL to continuous-action environments, which would involve developing an actor-critic algorithm using confidence-conditioned policies. 

\section*{Acknowledgements}
\vspace{-0.1in}
We thank the members of RAIL at UC Berkeley for their support and suggestions. We thank anonymous reviewers for feedback on an early version of this paper. This research is funded in part by the DARPA Assured Autonomy Program, the Office of Naval Research, and in part by compute resources from Google Cloud.

\bibliography{iclr2023_conference}

\begin{thebibliography}{45}
\providecommand{\natexlab}[1]{#1}
\providecommand{\url}[1]{\texttt{#1}}
\expandafter\ifx\csname urlstyle\endcsname\relax
  \providecommand{\doi}[1]{doi: #1}\else
  \providecommand{\doi}{doi: \begingroup \urlstyle{rm}\Url}\fi

\bibitem[Afsar et~al.(2021)Afsar, Crump, and Far]{afsar2021reinforcement}
Mohammad~Mehdi Afsar, Trafford Crump, and Behrouz~H. Far.
\newblock Reinforcement learning based recommender systems: {A} survey.
\newblock \emph{CoRR}, abs/2101.06286, 2021.

\bibitem[Agarwal et~al.(2020)Agarwal, Schuurmans, and
  Norouzi]{agarwal2019optimistic}
Rishabh Agarwal, Dale Schuurmans, and Mohammad Norouzi.
\newblock An optimistic perspective on offline reinforcement learning.
\newblock In \emph{International Conference on Machine Learning (ICML)}, 2020.

\bibitem[Agarwal et~al.(2021)Agarwal, Schwarzer, Castro, Courville, and
  Bellemare]{agarwal2021precipice}
Rishabh Agarwal, Max Schwarzer, Pablo~Samuel Castro, Aaron Courville, and
  Marc~G Bellemare.
\newblock Deep reinforcement learning at the edge of the statistical precipice.
\newblock \emph{Advances in Neural Information Processing Systems}, 2021.

\bibitem[An et~al.(2021)An, Moon, Kim, and Song]{an2021uncertainty}
Gaon An, Seungyong Moon, Jang-Hyun Kim, and Hyun~Oh Song.
\newblock Uncertainty-based offline reinforcement learning with diversified
  q-ensemble.
\newblock In \emph{Advances in Neural Information Processing Systems}, 2021.

\bibitem[Bellemare et~al.(2013)Bellemare, Naddaf, Veness, and
  Bowling]{bellemare2013ale}
Marc~G. Bellemare, Yavar Naddaf, Joel Veness, and Michael Bowling.
\newblock The arcade learning environment: An evaluation platform for general
  agents.
\newblock \emph{J. Artif. Int. Res.}, 47\penalty0 (1):\penalty0 253–279, May
  2013.
\newblock ISSN 1076-9757.

\bibitem[Bellemare et~al.(2017)Bellemare, Dabney, and
  Munos]{bellemare2017distributional}
Marc~G Bellemare, Will Dabney, and R{\'e}mi Munos.
\newblock A distributional perspective on reinforcement learning.
\newblock In \emph{Proceedings of the 34th International Conference on Machine
  Learning-Volume 70}, pp.\  449--458. JMLR. org, 2017.

\bibitem[Burda et~al.(2018)Burda, Edwards, Storkey, and
  Klimov]{burda2018exploration}
Yuri Burda, Harrison Edwards, Amos Storkey, and Oleg Klimov.
\newblock Exploration by random network distillation.
\newblock \emph{arXiv preprint arXiv:1810.12894}, 2018.

\bibitem[Chen et~al.(2021)Chen, Wang, Zhou, and Ross]{chen2021redq}
Xinyue Chen, Che Wang, Zijian Zhou, and Keith~W. Ross.
\newblock Randomized ensembled double q-learning: Learning fast without a
  model.
\newblock In \emph{{International Conference on Learning Representations
  (ICLR)}}, 2021.

\bibitem[Dabney et~al.(2017)Dabney, Rowland, Bellemare, and
  Munos]{dabney2017distributional}
Will Dabney, Mark Rowland, Marc~G Bellemare, and R{\'e}mi Munos.
\newblock Distributional reinforcement learning with quantile regression.
\newblock \emph{arXiv preprint arXiv:1710.10044}, 2017.

\bibitem[Dabney et~al.(2018{\natexlab{a}})Dabney, Ostrovski, Silver, and
  Munos]{dabney2018implicit}
Will Dabney, Georg Ostrovski, David Silver, and R{\'e}mi Munos.
\newblock Implicit quantile networks for distributional reinforcement learning.
\newblock \emph{arXiv preprint arXiv:1806.06923}, 2018{\natexlab{a}}.

\bibitem[Dabney et~al.(2018{\natexlab{b}})Dabney, Rowland, Bellemare, and
  Munos]{dabney2018distributional}
Will Dabney, Mark Rowland, Marc~G Bellemare, and R{\'e}mi Munos.
\newblock Distributional reinforcement learning with quantile regression.
\newblock In \emph{Thirty-Second AAAI Conference on Artificial Intelligence},
  2018{\natexlab{b}}.

\bibitem[Dai et~al.(2020)Dai, Nachum, Chow, Li, Szepesv{\'{a}}ri, and
  Schuurmans]{dai2020coindice}
Bo~Dai, Ofir Nachum, Yinlam Chow, Lihong Li, Csaba Szepesv{\'{a}}ri, and Dale
  Schuurmans.
\newblock Coindice: Off-policy confidence interval estimation.
\newblock In \emph{Advances in Neural Information Processing Systems}, 2020.

\bibitem[Fu et~al.(2019)Fu, Kumar, Soh, and Levine]{fu2019diagnosing}
Justin Fu, Aviral Kumar, Matthew Soh, and Sergey Levine.
\newblock Diagnosing bottlenecks in deep {Q}-learning algorithms.
\newblock \emph{arXiv preprint arXiv:1902.10250}, 2019.

\bibitem[Fujimoto \& Gu(2021)Fujimoto and Gu]{fujimoto2021minimalist}
Scott Fujimoto and Shixiang~Shane Gu.
\newblock A minimalist approach to offline reinforcement learning.
\newblock \emph{arXiv preprint arXiv:2106.06860}, 2021.

\bibitem[Fujimoto et~al.(2018)Fujimoto, Meger, and Precup]{fujimoto2018off}
Scott Fujimoto, David Meger, and Doina Precup.
\newblock Off-policy deep reinforcement learning without exploration.
\newblock \emph{arXiv preprint arXiv:1812.02900}, 2018.

\bibitem[Ghosh et~al.(2022)Ghosh, Ajay, Agrawal, and Levine]{ghosh2022offline}
Dibya Ghosh, Anurag Ajay, Pulkit Agrawal, and Sergey Levine.
\newblock Offline rl policies should be trained to be adaptive.
\newblock In \emph{International Conference on Machine Learning}, 2022.

\bibitem[Jiang \& Huang(2020)Jiang and Huang]{jiang2020minimax}
Nan Jiang and Jiawei Huang.
\newblock Minimax confidence interval for off-policy evaluation and policy
  optimization.
\newblock In \emph{Advances in Neural Information Processing Systems}, 2020.

\bibitem[Kalashnikov et~al.(2018)Kalashnikov, Irpan, Pastor, Ibarz, Herzog,
  Jang, Quillen, Holly, Kalakrishnan, Vanhoucke, et~al.]{kalashnikov2018qtopt}
Dmitry Kalashnikov, Alex Irpan, Peter Pastor, Julian Ibarz, Alexander Herzog,
  Eric Jang, Deirdre Quillen, Ethan Holly, Mrinal Kalakrishnan, Vincent
  Vanhoucke, et~al.
\newblock Scalable deep reinforcement learning for vision-based robotic
  manipulation.
\newblock In \emph{Conference on Robot Learning}, pp.\  651--673, 2018.

\bibitem[Kidambi et~al.(2020)Kidambi, Rajeswaran, Netrapalli, and
  Joachims]{kidambi2020morel}
Rahul Kidambi, Aravind Rajeswaran, Praneeth Netrapalli, and Thorsten Joachims.
\newblock Morel: Model-based offline reinforcement learning.
\newblock \emph{arXiv preprint arXiv:2005.05951}, 2020.

\bibitem[Kostrikov et~al.(2021)Kostrikov, Tompson, Fergus, and
  Nachum]{kostrikov2021offline}
Ilya Kostrikov, Jonathan Tompson, Rob Fergus, and Ofir Nachum.
\newblock Offline reinforcement learning with fisher divergence critic
  regularization.
\newblock \emph{arXiv preprint arXiv:2103.08050}, 2021.

\bibitem[Kumar et~al.(2019{\natexlab{a}})Kumar, Fu, Soh, Tucker, and
  Levine]{kumar2019stabilizing}
Aviral Kumar, Justin Fu, Matthew Soh, George Tucker, and Sergey Levine.
\newblock Stabilizing off-policy q-learning via bootstrapping error reduction.
\newblock In \emph{Advances in Neural Information Processing Systems}, pp.\
  11761--11771, 2019{\natexlab{a}}.

\bibitem[Kumar et~al.(2019{\natexlab{b}})Kumar, Fu, Tucker, and
  Levine]{kumar19bear}
Aviral Kumar, Justin Fu, George Tucker, and Sergey Levine.
\newblock Stabilizing off-policy q-learning via bootstrapping error reduction.
\newblock 2019{\natexlab{b}}.
\newblock URL \url{http://arxiv.org/abs/1906.00949}.

\bibitem[Kumar et~al.(2020)Kumar, Zhou, Tucker, and
  Levine]{kumar2020conservative}
Aviral Kumar, Aurick Zhou, George Tucker, and Sergey Levine.
\newblock Conservative q-learning for offline reinforcement learning.
\newblock \emph{arXiv preprint arXiv:2006.04779}, 2020.

\bibitem[Kumar et~al.(2022)Kumar, Agarwal, Ma, Courville, Tucker, and
  Levine]{kumar2022dr3}
Aviral Kumar, Rishabh Agarwal, Tengyu Ma, Aaron~C. Courville, George Tucker,
  and Sergey Levine.
\newblock {DR3:} value-based deep reinforcement learning requires explicit
  regularization.
\newblock In \emph{{International Conference on Learning Representations
  (ICLR)}}, 2022.

\bibitem[Lange et~al.(2012)Lange, Gabel, and Riedmiller]{LangeGR12}
Sascha Lange, Thomas Gabel, and Martin~A. Riedmiller.
\newblock Batch reinforcement learning.
\newblock In \emph{Reinforcement Learning}, volume~12. Springer, 2012.

\bibitem[Lee et~al.(2021)Lee, Laskin, Srinivas, and Abbeel]{lee2021sunrise}
Kimin Lee, Michael Laskin, Aravind Srinivas, and Pieter Abbeel.
\newblock {SUNRISE:} {A} simple unified framework for ensemble learning in deep
  reinforcement learning.
\newblock In \emph{International Conference on Machine Learning}, 2021.

\bibitem[Levine et~al.(2020)Levine, Kumar, Tucker, and Fu]{levine2020offline}
Sergey Levine, Aviral Kumar, George Tucker, and Justin Fu.
\newblock Offline reinforcement learning: Tutorial, review, and perspectives on
  open problems.
\newblock \emph{arXiv preprint arXiv:2005.01643}, 2020.

\bibitem[Liu et~al.(2019)Liu, Swaminathan, Agarwal, and Brunskill]{LiuSAB19}
Yao Liu, Adith Swaminathan, Alekh Agarwal, and Emma Brunskill.
\newblock Off-policy policy gradient with state distribution correction.
\newblock \emph{CoRR}, abs/1904.08473, 2019.

\bibitem[Liu et~al.(2020)Liu, Swaminathan, Agarwal, and
  Brunskill]{liu2020provably}
Yao Liu, Adith Swaminathan, Alekh Agarwal, and Emma Brunskill.
\newblock Provably good batch reinforcement learning without great exploration.
\newblock \emph{arXiv preprint arXiv:2007.08202}, 2020.

\bibitem[Mnih et~al.(2013)Mnih, Kavukcuoglu, Silver, Graves, Antonoglou,
  Wierstra, and Riedmiller]{mnih2013playing}
Volodymyr Mnih, Koray Kavukcuoglu, David Silver, Alex Graves, Ioannis
  Antonoglou, Daan Wierstra, and Martin Riedmiller.
\newblock Playing atari with deep reinforcement learning.
\newblock \emph{arXiv preprint arXiv:1312.5602}, 2013.

\bibitem[Mnih et~al.(2015)Mnih, Kavukcuoglu, Silver, Rusu, Veness, Bellemare,
  Graves, Riedmiller, Fidjeland, Ostrovski, et~al.]{mnih2015human}
Volodymyr Mnih, Koray Kavukcuoglu, David Silver, Andrei~A Rusu, Joel Veness,
  Marc~G Bellemare, Alex Graves, Martin Riedmiller, Andreas~K Fidjeland, Georg
  Ostrovski, et~al.
\newblock Human-level control through deep reinforcement learning.
\newblock \emph{Nature}, 518\penalty0 (7540):\penalty0 529--533, 2015.

\bibitem[Nachum et~al.(2019)Nachum, Dai, Kostrikov, Chow, Li, and
  Schuurmans]{nachum2019algaedice}
Ofir Nachum, Bo~Dai, Ilya Kostrikov, Yinlam Chow, Lihong Li, and Dale
  Schuurmans.
\newblock Algaedice: Policy gradient from arbitrary experience.
\newblock \emph{arXiv preprint arXiv:1912.02074}, 2019.

\bibitem[Nair et~al.(2020)Nair, Dalal, Gupta, and Levine]{nair2020accelerating}
Ashvin Nair, Murtaza Dalal, Abhishek Gupta, and Sergey Levine.
\newblock Accelerating online reinforcement learning with offline datasets.
\newblock \emph{arXiv preprint arXiv:2006.09359}, 2020.

\bibitem[O'Donoghue et~al.(2018)O'Donoghue, Osband, Munos, and
  Mnih]{donohue2018uncertainty}
Brendan O'Donoghue, Ian Osband, R{\'{e}}mi Munos, and Volodymyr Mnih.
\newblock The uncertainty bellman equation and exploration.
\newblock In \emph{International Conference on Machine Learning}, 2018.

\bibitem[Peng et~al.(2019)Peng, Kumar, Zhang, and Levine]{peng2019advantage}
Xue~Bin Peng, Aviral Kumar, Grace Zhang, and Sergey Levine.
\newblock Advantage-weighted regression: Simple and scalable off-policy
  reinforcement learning.
\newblock \emph{arXiv preprint arXiv:1910.00177}, 2019.

\bibitem[Rashidinejad et~al.(2021)Rashidinejad, Zhu, Ma, Jiao, and
  Russell]{rashidinejad2021bridging}
Paria Rashidinejad, Banghua Zhu, Cong Ma, Jiantao Jiao, and Stuart Russell.
\newblock Bridging offline reinforcement learning and imitation learning: A
  tale of pessimism.
\newblock \emph{arXiv preprint arXiv:2103.12021}, 2021.

\bibitem[Rezaeifar et~al.(2021)Rezaeifar, Dadashi, Vieillard, Hussenot, Bachem,
  Pietquin, and Geist]{razaeifar2021offline}
Shideh Rezaeifar, Robert Dadashi, Nino Vieillard, L{\'{e}}onard Hussenot,
  Olivier Bachem, Olivier Pietquin, and Matthieu Geist.
\newblock Offline reinforcement learning as anti-exploration.
\newblock \emph{CoRR}, abs/2106.06431, 2021.

\bibitem[Shortreed et~al.(2011)Shortreed, Laber, Lizotte, Stroup, Pineau, and
  Murphy]{shortreed2011informing}
Susan~M Shortreed, Eric Laber, Daniel~J Lizotte, T~Scott Stroup, Joelle Pineau,
  and Susan~A Murphy.
\newblock Informing sequential clinical decision-making through reinforcement
  learning: an empirical study.
\newblock \emph{Machine learning}, 84\penalty0 (1-2):\penalty0 109--136, 2011.

\bibitem[Siegel et~al.(2020)Siegel, Springenberg, Berkenkamp, Abdolmaleki,
  Neunert, Lampe, Hafner, and Riedmiller]{siegel2020keep}
Noah~Y Siegel, Jost~Tobias Springenberg, Felix Berkenkamp, Abbas Abdolmaleki,
  Michael Neunert, Thomas Lampe, Roland Hafner, and Martin Riedmiller.
\newblock Keep doing what worked: Behavioral modelling priors for offline
  reinforcement learning.
\newblock \emph{arXiv preprint arXiv:2002.08396}, 2020.

\bibitem[Swaminathan \& Joachims(2015)Swaminathan and Joachims]{SwaminathanJ15}
Adith Swaminathan and Thorsten Joachims.
\newblock Batch learning from logged bandit feedback through counterfactual
  risk minimization.
\newblock \emph{J. Mach. Learn. Res}, 16:\penalty0 1731--1755, 2015.

\bibitem[Wang et~al.(2018)Wang, Zhang, He, and Zha]{Wang2018SupervisedRL}
L.~Wang, Wei Zhang, Xiaofeng He, and H.~Zha.
\newblock Supervised reinforcement learning with recurrent neural network for
  dynamic treatment recommendation.
\newblock \emph{Proceedings of the 24th ACM SIGKDD International Conference on
  Knowledge Discovery \& Data Mining}, 2018.

\bibitem[Wu et~al.(2019)Wu, Tucker, and Nachum]{wu2019behavior}
Yifan Wu, George Tucker, and Ofir Nachum.
\newblock Behavior regularized offline reinforcement learning.
\newblock \emph{arXiv preprint arXiv:1911.11361}, 2019.

\bibitem[Xie et~al.(2021)Xie, Cheng, Jiang, Mineiro, and
  Agarwal]{xie2021bellman}
Tengyang Xie, Ching{-}An Cheng, Nan Jiang, Paul Mineiro, and Alekh Agarwal.
\newblock Bellman-consistent pessimism for offline reinforcement learning.
\newblock In \emph{Advances in Neural Information Processing Systems}, 2021.

\bibitem[Yu et~al.(2020)Yu, Thomas, Yu, Ermon, Zou, Levine, Finn, and
  Ma]{yu2020mopo}
Tianhe Yu, Garrett Thomas, Lantao Yu, Stefano Ermon, James Zou, Sergey Levine,
  Chelsea Finn, and Tengyu Ma.
\newblock Mopo: Model-based offline policy optimization.
\newblock \emph{arXiv preprint arXiv:2005.13239}, 2020.

\bibitem[Yu et~al.(2021)Yu, Kumar, Rafailov, Rajeswaran, Levine, and
  Finn]{yu2021combo}
Tianhe Yu, Aviral Kumar, Rafael Rafailov, Aravind Rajeswaran, Sergey Levine,
  and Chelsea Finn.
\newblock Combo: Conservative offline model-based policy optimization.
\newblock \emph{arXiv preprint arXiv:2102.08363}, 2021.

\end{thebibliography}
\bibliographystyle{iclr2023_conference}

\newpage
\appendix

\section{Proofs}
\label{sec:proofs}
In this section, we provide proofs of theorems stated in Section~\ref{sec:analysis}. Recall from Section~\ref{sec:prelim} that $\iota = \polylog(|\states|, (1 - \gamma)^{-1}, N)$ is some constant. Our proofs rely on the following lemma, which bounds the estimation error due to using the empirical Bellman operator:
\begin{lemma}
For all state-action $(\bs, \ba) \in \states \times \actions$ such that $n(\bs, \ba) \geq 1$, function $Q$, and $\delta \in (0, 1)$, we have:
\begin{align*}
\prob{\abs{\hat{\bellman}^*Q(\bs, \ba) - \bellman^* Q(\bs, \ba)} \leq 
    \sqrt{\frac{\iota \log(1 / \delta)}{n(\bs, \ba)}} }
    \geq 1 - \delta\,.
\end{align*}
\label{lem:concentration}
\end{lemma}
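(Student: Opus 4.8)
The plan is to recognize $\hat{\bellman}^* Q(\bs, \ba) - \bellman^* Q(\bs, \ba)$ as the deviation of an empirical average of i.i.d.\ bounded random variables from its mean, and then apply a Chernoff--Hoeffding bound. Concretely, let $(r_i, \bs_i')_{i=1}^{n(\bs,\ba)}$ denote the transitions in $\data$ originating from the pair $(\bs, \ba)$, and set $X_i = r_i + \gamma \max_{\ba'} Q(\bs_i', \ba')$. Since $\hat{\bellman}^*$ is defined by replacing the true reward and transition kernel with their empirical counterparts over these $n(\bs,\ba)$ samples, we have $\hat{\bellman}^* Q(\bs, \ba) = \frac{1}{n(\bs,\ba)} \sum_{i} X_i$, whereas by definition $\bellman^* Q(\bs, \ba) = \mathbb{E}[X_i]$. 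The summands are i.i.d.\ and bounded: using $|r_i| \le R_{max}$ together with $\|Q\|_\infty \le R_{max}/(1-\gamma)$, each $X_i$ lies in an interval of width $w = O\!\left(R_{max}/(1-\gamma)\right)$.

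First I would apply Hoeffding's inequality to $\sum_i X_i$, yielding $\prob{\abs{\hat{\bellman}^* Q(\bs,\ba) - \bellman^* Q(\bs,\ba)} \ge t} \le 2\exp\!\left(-2 n(\bs,\ba)\, t^2 / w^2\right)$. Setting the right-hand side equal to $\delta$ and solving for $t$ gives $t = w\sqrt{\log(2/\delta)/(2\, n(\bs,\ba))}$, which is of the claimed form $\sqrt{\iota \log(1/\delta)/n(\bs,\ba)}$ once the squared range $w^2$, the factor $\log 2$, and the absolute constants are absorbed into the polylogarithmic quantity $\iota$.

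The main obstacle is the quantifier ``for all function $Q$,'' because in the applications $Q$ will be a data-dependent iterate such as $\hat{Q}^k$; then $X_i$ is no longer independent of the samples and the naive Hoeffding step fails. To handle this I would pass to a uniform bound: in the tabular setting the relevant statistic depends on $Q$ only through the finitely many values $\{Q(\bs', \ba')\}$, so I would fix a finite $\epsilon$-net over the bounded, $|\states|\,|\actions|$-dimensional space of admissible $Q$-functions, apply the pointwise Hoeffding bound together with a union bound over the net, and then control the discretization error using that $Q \mapsto \max_{\ba'} Q(\cdot, \ba')$ is $1$-Lipschitz in $\|\cdot\|_\infty$. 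The cardinality of the net, together with a union over the state-action pairs needed later, contribute only factors logarithmic in $|\states|$, $(1-\gamma)^{-1}$, and $N$, which is precisely what the definition $\iota = \polylog(|\states|, (1-\gamma)^{-1}, N)$ is designed to absorb. The remaining steps—verifying the range bound and checking the net/Lipschitz estimate—are routine.
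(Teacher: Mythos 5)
Your core argument matches the paper's: the paper does not actually prove this lemma, but simply cites it as a standard result \citep{rashidinejad2021bridging} "whose derivation follows from Hoeffding's inequalities," and your first paragraph is exactly that derivation, correctly identifying $\hat{\bellman}^*Q(\bs,\ba)$ as an empirical mean of the bounded i.i.d.\ variables $X_i = r_i + \gamma\max_{\ba'}Q(\bs_i',\ba')$. Your second step, the $\epsilon$-net and union bound to make the statement uniform over data-dependent $Q$, goes beyond anything in the paper; it is a legitimate concern, since the lemma is later applied to the learned iterate $\hat{Q}$, but note that as literally stated (quantifier over $Q$ outside the probability) the lemma only asserts the pointwise bound, for which plain Hoeffding suffices. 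Two small accounting caveats: the squared range $w^2 = O(R_{max}^2(1-\gamma)^{-2})$ and the $\log$ of the covering number (which scales as $|\states||\actions|$) are polynomial, not polylogarithmic, in $(1-\gamma)^{-1}$ and $|\states|$ respectively, so they cannot strictly be absorbed into $\iota = \polylog(|\states|,(1-\gamma)^{-1},N)$ as you claim --- though the paper is equally loose on this point, treating $\iota$ as "changing with context."
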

The above lemma is a well-known result in reinforcement learning \citep{rashidinejad2021bridging}, whose derivation follows from Hoeffding's inequalities.

\subsection{Proof of Theorem~\ref{thm:lower bound}}

Without loss of generality, assume that $\delta_1, \delta_2 \leq \delta$ are the solution to the outer maximization of Equation~\ref{eq:update} at convergence.
Using Lemma~\ref{lem:concentration}, we have that
\begin{align*}
\hat{Q}(\bs, \ba, \delta) 
&=  \hat{\bellman}^*\hat{Q}(\bs, \ba, \delta_2) - \alpha \sqrt{\frac{\log(1/\delta_1)}{n(\bs, \ba) \wedge 1}} \\
&\leq 
 \bellman^*\hat{Q}(\bs, \ba, \delta_2) - \alpha \sqrt{\frac{\log(1/\delta_1)}{n(\bs, \ba) \wedge 1}} + \sqrt{\frac{\iota \log(1 / \delta_1)}{n(\bs, \ba)}} 
 \leq 
 \bellman^*\hat{Q}(\bs, \ba, \delta_2) \quad \forall \bs \in \states, \ba \in \actions\,,
\end{align*}
holds with probability at least $1 - \delta_1$ for any $\alpha \geq \iota^{1/2}$. Using Lemma~\ref{lem:monotonicity}, we have
\begin{align*}
\hat{Q}(\bs, \ba, \delta) 
\leq 
\bellman^*\hat{Q}(\bs, \ba, \delta)
&\Longrightarrow 
\hat{Q} \leq (I - \gamma P^*)^{-1}R \\
&\Longrightarrow 
\hat{Q}(\bs, \ba) \leq Q^*(\bs, \ba) \quad \forall \bs \in \states, \ba \in \actions\,,
\end{align*}
holds with probability at least $1 - \delta_1 \geq 1 - \delta$, as desired.

\subsection{Proof of Theorem~\ref{thm:cql lower bound}}
Recall from Equation~\ref{eq:cql update} that at convergence, we have,
\begin{align*}
\hat{Q}(\bs, \ba, \delta) &= \arg\min_Q \max_{\delta_1, \delta_2} \max_\pi \alpha \sqrt{\frac{\log(1 / \delta_1)}{(n(\bs) \wedge 1)}} \left(\E{\bs \sim D, \ba \sim \pi(\ba \mid s)}{Q(\bs, \ba, \delta)} - \E{\bs, \ba \sim D}{Q(\bs, \ba, \delta)} \right) \nonumber \\
&\qquad + \frac{1}{2} \E{\bs, \ba, \bs' \sim D} {\left(Q(\bs, \ba, \delta) - \hat{\bellman}^*\,\hat{Q}(\bs, \ba, \delta_2)\right)^2} + \mathcal{R}(\pi) \\
&\leq 
\max_{\delta_1, \delta_2} \max_\pi \arg\min_Q \alpha \sqrt{\frac{\log(1 / \delta_1)}{(n(\bs) \wedge 1)}} \left(\E{\bs \sim D, \ba \sim \pi(\ba \mid s)}{Q(\bs, \ba, \delta)} - \E{\bs, \ba \sim D}{Q(\bs, \ba, \delta)} \right) \nonumber \\
&\qquad + \frac{1}{2} \E{\bs, \ba, \bs' \sim D} {\left(Q(\bs, \ba, \delta) - \hat{\bellman}^*\,\hat{Q}(\bs, \ba, \delta_2)\right)^2} + \mathcal{R}(\pi)
\end{align*}

For any $\delta_1, \delta_2 \leq \delta$ and $\pi$, we have that the solution to the inner-minimization over $Q$ yields
\begin{align*}
\tilde{Q}(\bs, \ba, \delta, \delta_1, \delta_2, \pi) &= \arg\min_Q \alpha \sqrt{\frac{\log(1 / \delta_1)}{(n(\bs) \wedge 1)}} \left(\E{\bs \sim D, \ba \sim \pi(\ba \mid s)}{Q(\bs, \ba, \delta)} - \E{\bs, \ba \sim D}{Q(\bs, \ba, \delta)} \right) \nonumber \\
&\qquad + \frac{1}{2} \E{\bs, \ba, \bs' \sim D} {\left(Q(\bs, \ba, \delta) - \hat{\bellman}^*\,\hat{Q}(\bs, \ba, \delta_2)\right)^2} \\
&\leq \hat{\bellman}^*\,\hat{Q}(\bs, \ba, \delta_2) - \alpha \sqrt{\frac{\log(1 / \delta_1)}{n(\bs)}} \ \left[ \frac{\pi(\ba \mid \bs)}{\pi_\beta(\ba \mid \bs)} - 1\right]\,.
\end{align*}
This arises from taking the derivative of the minimization objective, and solving for $Q$ that makes the derivative equal to $0$. 
Note that we can simplify 
\begin{align*}
\alpha \sqrt{\frac{\log(1 / \delta_1)}{n(\bs)}} \ \left[ \frac{\pi(\ba \mid \bs)}{\pi_\beta(\ba \mid \bs)} - 1\right] 
&=
\alpha \sqrt{\frac{\log(1 / \delta_1)}{n(\bs)}} \ \left[ \frac{\pi(\ba \mid \bs) - \pi_\beta(\ba \mid \bs)}{\pi_\beta(\ba \mid \bs)}\right] \\
&=
\alpha \sqrt{\frac{\log(1 / \delta_1)}{n(\bs, \ba)}} \ \left[ \frac{\pi(\ba \mid \bs) - \pi_\beta(\ba \mid \bs)}{\sqrt{\pi_\beta(\ba \mid \bs)}}\right]\,.
\end{align*}
Without loss of generality, assume that $\delta_1, \delta_2 \leq \delta$ and $\pi$ are the solution to the outer-maximization. Substituting the previous result into the equation for $\hat{Q}(\bs, \ba, \delta)$, and applying Lemma~\ref{lem:concentration} yields,
\begin{align*}
\hat{Q}(\bs, \ba, \delta) 
&\leq 
\hat{\bellman}^*\,\hat{Q}(\bs, \ba, \delta_2) - \alpha \sqrt{\frac{\log(1 / \delta_1)}{n(\bs, \ba)}} \ \left[ \frac{\pi(\ba \mid \bs) - \pi_\beta(\ba \mid \bs)}{\sqrt{\pi_\beta(\ba \mid \bs)}}\right] \\
&\leq 
\hat{\bellman}^*\,\hat{Q}(\bs, \ba, \delta_2) - \alpha \sqrt{\frac{\log(1 / \delta_1)}{n(\bs, \ba)}} \ \left[ \frac{\pi(\ba \mid \bs) - \pi_\beta(\ba \mid \bs)}{\sqrt{\pi_\beta(\ba \mid \bs)}}\right] + \sqrt{\frac{\iota \log(1 / \delta_1)}{n(\bs, \ba)}}\,.
\end{align*}
Note that the middle term is not positive if $\pi(\ba \mid \bs) < \pi_\beta(\ba \mid \bs)$. However, we know that for $\ba^* = \arg\max_{\ba} \hat{Q}(\bs, \ba, \delta)$ then $\pi(\ba \mid \bs) \geq \pi_\beta(\ba \mid \bs)$ by definition of $\pi$ maximizing the learned Q-values. Therefore, we have
\begin{align*}
\hat{V}(\bs, \delta)
= \hat{Q}(\bs, \ba^*, \delta) 
&\leq 
\hat{\bellman}^*\,\hat{Q}(\bs, \ba^*, \delta_2) - \alpha \sqrt{\frac{\log(1 / \delta_1)}{n(\bs, \ba^*)}} \ \left[\frac{\pi(\ba^* \mid \bs) - \pi_\beta(\ba^* \mid \bs)}{\sqrt{\pi_\beta(\ba^* \mid \bs)}}\right] + \sqrt{\frac{\iota \log(1 / \delta_1)}{n(\bs, \ba^*)}} \\
&\leq 
\hat{\bellman}^*\,\hat{V}(\bs, \delta_2) \quad \forall \bs \in \states
\end{align*}
holds with probability at least $1 - \delta_1$ for $\alpha$ satisfying
\begin{align*}
\alpha \geq \iota^{1/2} \max_{\bs, \ba} \left[\frac{\pi(\ba \mid \bs) - \pi_\beta(\ba \mid \bs)}{\sqrt{\pi_\beta(\ba \mid \bs)}}\right]^{-1}\,.
\end{align*}
Then, using Lemma~\ref{lem:concentration}, we have
\begin{align*}
\hat{V}(\bs, \delta) \leq \hat{\bellman}^*\,\hat{V}(\bs, \delta)
\Longrightarrow
\hat{V}(\bs, \delta) \leq V^*(\bs) \quad \forall \bs \in \states,
\end{align*}
holds with probability at least $1 - \delta_1 \geq 1 - \delta$, as desired.
\newpage 

\section{Atari Results}
\label{sec:atari_results}

In this section, we provide per-game results across all Atari games that we evaluated on for the three considered dataset sizes. As mentioned in the main paper, we use the hyperparameter configuration detailed in \citet{kumar2022dr3} for our Atari experiments. For completion, we also reproduce the table in this section.

\begin{table}[H]
\small
\caption{Hyperparameters used by the offline RL Atari agents in our experiments. We follow the setup of \citet{agarwal2019optimistic,kumar2022dr3}.}
\centering
\begin{tabular}{lrr}
\toprule
Hyperparameter & \multicolumn{2}{r}{Setting (for both variations)} \\
\midrule
Sticky actions && Yes        \\
Sticky action probability && 0.25\\
Grey-scaling && True \\
Observation down-sampling && (84, 84) \\
Frames stacked && 4 \\
Frame skip~(Action repetitions) && 4 \\
Reward clipping && [-1, 1] \\
Terminal condition && Game Over \\
Max frames per episode && 108K \\
Discount factor && 0.99 \\
Mini-batch size && 32 \\
Target network update period & \multicolumn{2}{r}{every 2000 updates} \\
Training environment steps per iteration && 250K \\
Update period every && 4 environment steps \\
Evaluation $\epsilon$ && 0.001 \\
Evaluation steps per iteration && 125K \\
$Q$-network: channels && 32, 64, 64 \\
$Q$-network: filter size && $8\times8$, $4\times4$, $3\times3$\\
$Q$-network: stride && 4, 2, 1\\
$Q$-network: hidden units && 512 \\
\bottomrule
\end{tabular}
\label{table:hyperparams_atari}
\end{table}

\begin{table}[H]
\small{
\centering
\begin{tabular}{l|c c|c c | c}
\toprule
\textbf{Game} &\textbf{REM} & \textbf{CQL} &\textbf{AEVL} &\textbf{Fixed-CCVL} & \textbf{CCVL} \\ \midrule
 Asterix  & $405.7 \pm 46.5$ & $821.4 \pm 75.1$ & $421.2 \pm 67.8$ & $874.0 \pm 64.3$ & $1032.1 \pm 86.7$ \\
 Breakout & $14.3 \pm 2.8$ & $32.0 \pm 3.2$ & $7.4 \pm 1.9$ & $28.7 \pm 2.8$ & $31.2 \pm 4.3$\\
 Pong &  $-7.7 \pm 6.3$ & $14.2 \pm 3.3$ & $-8.4 \pm 6.8$ & $14.7 \pm 3.8$ & $15.8 \pm 4.4$\\
 Seaquest &$293.3 \pm 191.5$ &$446.6 \pm 26.9 $ & $320.6 \pm 154.1$ & $422.0 \pm 21.9$ & $551.2 \pm 42.2$\\
 Qbert & $436.3 \pm 111.5$ & $9162.7 \pm  993.6$ & $294.6 \pm 100.3$ & $9172.3 \pm 907.6$ & $9170.1 \pm 1023.5$\\
 SpaceInvaders & $206.6 \pm 77.6$ & $351.9 \pm 77.1$ & $224.2 \pm 84.7$ & $355.7 \pm 80.2$ & $355.4 \pm 81.1$ \\
 Zaxxon & $2596.4 \pm 1726.4$ & $1757.4 \pm 879.4$ & $2467.8 \pm 2023.4$ & $1747.6 \pm 894.3$ & $2273.6 \pm 1803.1$ \\
 YarsRevenge  &$5480.2 \pm 962.3$ &$16011.3 \pm 1409.0$ & $4857.1 \pm 1012.6$ & $15890.7 \pm 1218.2$ & $20140.5 \pm 2022.8$\\
 RoadRunner & $3872.9 \pm 1616.4$ & $24928.7 \pm 7484.5$ & $5048.3 \pm 2156.5$ & $22590.3 \pm 6860.2$ & $26780.5 \pm 10112.3$ \\
 MsPacman & $1275.1 \pm 345.6$ & $2245.7 \pm 193.8$ & $1164.7 \pm 508.2$ & $2542.3 \pm 188.4$ & $2673.2 \pm 226.4$ \\
 BeamRider & $522.9 \pm 42.2$ & $617.9 \pm 25.1$ & $600.1 \pm 57.3$ & $645.3 \pm 40.1$ & $630.2 \pm 37.8$\\
 Jamesbond & $157.6 \pm 65.0$ & $460.5 \pm 102.0$ & $114.3 \pm 56.7$ & $462.1 \pm 98.4$ & $452.1 \pm 153.9$\\
 Enduro  & $132.4 \pm 16.1$ & $253.5 \pm 14.2$ & $103.2 \pm 10.1$ & $244.8 \pm 20.9$ & $274.5 \pm 23.8$\\
 WizardOfWor & $1663.7 \pm 417.8$ & $904.6 \pm 343.7$ & $1640.7 \pm 383.4$ & $1488.1 \pm 450.9$ & $1513.8 \pm 652.1$\\
 IceHockey & $-9.1 \pm 5.1$ &  $-7.8 \pm 0.9$ & $-10.4 \pm 4.9$ & $-7.6 \pm 1.1$ & $-7.1 \pm 1.6$\\
 DoubleDunk & $-17.6 \pm 1.5$ & $-14.0 \pm 2.8$ & $-16.8 \pm 2.9$ & $-14.1 \pm 1.8$ & $-13.4 \pm 4.9$\\
 DemonAttack & $162.0 \pm 34.7$ &  $386.2 \pm 75.3$ & $183.2 \pm 44.7$ & $372.9 \pm 81.7$ & $570.3 \pm 110.2$\\
\bottomrule
 \end{tabular}}
\caption{Mean and standard deviation of returns per Atari game across $5$ random seeds using $1\%$ of replay dataset after $6.25$M gradient steps. REM and CQL results are from \citet{kumar2022dr3}.}
\label{tab:atari_1percent}
\end{table}

\begin{table}[H]
\small{
\centering
\begin{tabular}{l|c c|c c | c}
\toprule
\textbf{Game} &\textbf{REM} & \textbf{CQL} &\textbf{AEVL} &\textbf{Fixed-CCVL} & \textbf{CCVL} \\ \midrule
 Asterix  & $2317.0 \pm 838.1$ & $3318.5 \pm 301.7$ & $1958.9 \pm 1050.2$& $3256.6 \pm 395.1$ & $5517.2 \pm 1215.4$ \\
 Breakout & $33.4 \pm 4.0$ & $166.0 \pm 23.1$ & $16.7 \pm 5.6$ & $150.3 \pm 17.8$ & $172.5 \pm 35.6$\\
 Pong &  $-0.7 \pm 9.9$ & $17.9 \pm 1.1 $ & $-0.2 \pm 4.7$ & $17.6 \pm 2.1$ & $17.4 \pm 2.8$\\
 Seaquest & $2753.6 \pm 1119.7$ & $2030.7 \pm 822.8$ & $2853.0 \pm 1089.2$ & $2112.5 \pm 856.4$ & $2746.0 \pm 1544.2$\\
 Qbert & $7417.0 \pm 2106.7$ & $9605.6 \pm 1593.5$ & $5409.2 \pm 3256.6$ & $9750.7 \pm 1366.8$ & $10108.1 \pm 2445.5$\\
 SpaceInvaders & $443.5 \pm 67.4$ & $1214.6 \pm 281.8$ & $450.2 \pm 101.3$ & $1243.4 \pm 269.8$ & $1154.6 \pm 302.1$ \\
 Zaxxon & $1609.7 \pm 1814.1$ & $4250.1 \pm 626.2$ & $1678.2 \pm 1425.6$ & $4060.3 \pm 673.1$ & $6470.2 \pm 1512.2$ \\
 YarsRevenge  &$16930.4 \pm 2625.8$ &$17124.7 \pm 2125.6$ & $17233.5 \pm 2590.8$ & $18040.5 \pm 1545.9$ & $19233.0 \pm 1719.2$\\
 RoadRunner & $46601.6 \pm 2617.2$ & $38432.6 \pm 1539.7$ & $45035.2 \pm 3823.0$ & $37945.7 \pm 1338.9$ & $42780.5 \pm 4112.3$ \\
 MsPacman & $2303.1 \pm 202.7$ & $2790.6 \pm 353.1$ & $2148.8 \pm 273.4$ & $2501.5 \pm 201.3$ & $2680.4 \pm 212.4$ \\
 BeamRider & $674.8 \pm 21.4$ & $785.8 \pm 43.5$ & $662.9 \pm 50.7$ & $782.3 \pm 34.9$ & $780.1 \pm 40.8$\\
 Jamesbond & $130.5 \pm 45.7$ & $96.8 \pm 43.2 $ & $152.2 \pm 58.2$ & $112.3 \pm 81.3$ & $172.1 \pm 153.9$\\
 Enduro  & $1583.9 \pm 108.7$ & $938.5 \pm 63.9$ & $1602.7 \pm 135.5$ & $913.2 \pm 50.3$ & $1376.2 \pm 203.8$\\
 WizardOfWor & $2661.6 \pm 371.4$ & $612.0 \pm 343.3$ & $1767.5 \pm 462.1$ & $707.4 \pm 323.2$& $2723.1 \pm 515.6$\\
 IceHockey & $-6.5 \pm 3.1$ &  $-15.0 \pm 0.7$ & $-9.1 \pm 4.8$ & $-17.6 \pm 1.0$ & $-10.2 \pm 2.1$\\
 DoubleDunk & $-17.6 \pm 2.6$ & $-16.2 \pm 1.7$ & $-19.4 \pm 3.2$& $-15.2 \pm 0.9$ & $-9.8 \pm 3.8$\\
 DemonAttack & $5602.3 \pm 1855.5$ &  $8517.4 \pm 1065.9$ & $2455.3 \pm 1765.0$& $8238.7 \pm 1091.2$ & $9730.0 \pm 1550.7$\\
\bottomrule
 \end{tabular}}
\caption{Mean and standard deviation of returns per Atari game across $5$ random seeds using $5\%$ of replay dataset after $12.5$M gradient steps. REM and CQL results are from \citet{kumar2022dr3}.}
\label{tab:atari_5percent}
\end{table}

\begin{table}[H]
\small{
\centering
\begin{tabular}{l|c c|c c | c}
\toprule
\textbf{Game} &\textbf{REM} & \textbf{CQL} &\textbf{AEVL} &\textbf{Fixed-CCVL} & \textbf{CCVL} \\ \midrule
 Asterix  & $5122.9 \pm 328.9$ & $3906.2 \pm 521.3$ & $7494.7 \pm 380.3$ & $3582.1 \pm 327.5$ & $7576.0 \pm 360.2$ \\
 Breakout & $96.8 \pm 21.2$ & $70.8 \pm 5.5$ & $97.1 \pm 35.7$ & $75.8 \pm 6.1$ &  $121.4 \pm 10.3$\\
 Pong &  $7.6 \pm 11.1$ & $5.5 \pm 6.2$ & $7.1 \pm 12.9$& $5.2 \pm 6.0$& $13.4 \pm 6.1$\\
 Seaquest &$981.3 \pm 605.9$ &$1313.0 \pm 220.0$ & $877.2 \pm 750.1$ & $1232.6 \pm 379.3$ & $1211.4 \pm 437.2$\\
 Qbert & $4126.2 \pm 495.7$ & $5395.3 \pm 1003.6 4$ & $4713.6 \pm 617.0$ & $5105.5 \pm 986.4$ & $5590.9 \pm 2111.4$\\
 SpaceInvaders & $799.0 \pm 28.3$ & $938.1 \pm 80.3$ & $692.7 \pm 101.9$ & $860.5 \pm 77.3$ & $1233.4 \pm 103.1$ \\
 Zaxxon & $0.0 \pm 0.0$ & $836.8 \pm 434.7$ & $902.5 \pm 895.2$ & $904.1 \pm 560.1$ & $1212.2 \pm 902.1$ \\
 YarsRevenge  &$11924.8 \pm 2413.8$ &$12413.9 \pm 2869.7$ & $12508.5 \pm 1540.2$ & $11587.2 \pm 2676.8$ & $12502.6 \pm 2349.2$\\
 RoadRunner & $49129.4 \pm 1887.9$ & $45336.9 \pm 1366.7$ & $50152.9 \pm 2208.9$ & $44832.6 \pm 1329.8$ & $47972.1 \pm 2991.3$ \\
 MsPacman & $2268.8 \pm 455.0$ & $2427.5 \pm 191.3$ & $2515.5 \pm 548.0$ & $2115.3 \pm 108.9$ & $2015.7 \pm 352.8$ \\
 BeamRider & $4154.9 \pm 357.2$ & $3468.0 \pm 238.0$ & $4564.7 \pm 578.4$ & $3312.3 \pm 247.3$ & $3781.0 \pm 401.8$\\
 Jamesbond & $149.3 \pm 304.5$ & $89.7 \pm 15.6$ & $127.6 \pm 414.8$ & $91.9 \pm 20.2$ & $152.8 \pm 42.8$ \\
 Enduro  & $832.5 \pm 65.5$ & $1160.2 \pm 81.5$ & $959.2 \pm 100.3$ & $1204.6 \pm 90.3$ & $1585.0 \pm 102.1$ \\
 WizardOfWor & $920.0 \pm 497.0$ & $764.7 \pm 250.0$ & $1184.3 \pm 588.9$ & $749.3 \pm 231.8$ & $1429.9 \pm 751.4$\\
 IceHockey & $-5.9 \pm 5.1$ &  $-16.0 \pm 1.3$ & $-5.2 \pm 7.3$ & $-14.9 \pm 2.5$ & $-4.1 \pm 5.9$\\
 DoubleDunk & $-19.5 \pm 2.5$ & $-20.6 \pm 1.0$ & $-19.2 \pm 2.2$ & $-21.3 \pm 1.7$ & $-24.6 \pm 6.2$\\
 DemonAttack & $9674.7 \pm 1600.6$ & $7152.9 \pm 723.2$ & $10345.3 \pm 1612.3$ & $7416.8 \pm 1598.7$ & $12330.5 \pm 1590.4$\\
\bottomrule
 \end{tabular}}
\caption{Mean and standard deviation of returns per Atari game across $5$ random seeds using initial $10\%$ of replay dataset after $12.5$M gradient steps. REM and CQL results are from \citet{kumar2022dr3}.}
\label{tab:atari_10percent}
\end{table}

\end{document}